
\documentclass{article}

\usepackage{microtype}
\usepackage{graphicx}
\usepackage{subfigure}
\usepackage{booktabs} 

\usepackage{hyperref}


\usepackage[accepted]{icml2025}


\usepackage{amsmath}
\usepackage{amssymb}
\usepackage{mathtools}
\usepackage{amsthm}
\usepackage{booktabs}
\usepackage{multirow}
\usepackage{xspace}

\usepackage{enumitem}

\usepackage[capitalize,noabbrev]{cleveref}

\theoremstyle{plain}
\newtheorem{theorem}{Theorem}[section]

\theoremstyle{definition}

\theoremstyle{remark}
\newtheorem{remark}[theorem]{Remark}
\usepackage{Definitions}

\newcommand{\algcommentline}[1]{\texttt{\textit{\textcolor{gray}{\# #1}}}}

\newcommand{\algname}{{Reweighted Score Matching}\xspace}
\newcommand{\algabb}{\texttt{RSM}\xspace}
\newcommand{\algnamey}{{Diffusion Policy Mirror Descent}\xspace}
\newcommand{\algabby}{\texttt{DPMD}\xspace}
\newcommand{\algnamen}{{Soft Diffusion Actor-Critic}\xspace}
\newcommand{\algabbn}{\texttt{SDAC}\xspace}
\icmlsetsymbol{equal}{*}

\usepackage[textsize=tiny]{todonotes}
\usepackage{enumitem}
\usepackage{thmtools} 
\usepackage{thm-restate}

\icmltitlerunning{Efficient Online Reinforcement Learning for Diffusion Policy}

\begin{document}
\twocolumn[
\icmltitle{Efficient Online Reinforcement Learning for Diffusion Policy}



\begin{icmlauthorlist}
\icmlauthor{Haitong Ma}{harvard}
\icmlauthor{Tianyi Chen}{gatech}
\icmlauthor{Kai Wang}{gatech}
\icmlauthor{Na Li}{equal,harvard}
\icmlauthor{Bo Dai}{equal,gatech}
\end{icmlauthorlist}
\icmlaffiliation{harvard}{Harvard University.}
\icmlaffiliation{gatech}{Georgia Institute of Technology.}

\icmlcorrespondingauthor{Haitong Ma}{haitongma@g.harvard.edu}
\icmlcorrespondingauthor{Na Li}{nali@seas.harvard.edu}
\icmlcorrespondingauthor{Bo Dai}{bodai@cc.gatech.edu}

\icmlkeywords{Machine Learning, ICML}

\vskip 0.3in
]


\printAffiliationsAndNotice{\icmlEqualSupervision 
} 

\begin{abstract}
Diffusion policies have achieved superior performance in imitation learning and offline reinforcement learning (RL) due to their rich expressiveness. However, the conventional diffusion training procedure requires samples from target distribution, which is impossible in online RL since we cannot sample from the optimal policy. 
Backpropagating policy gradient through the diffusion process incurs huge computational costs and instability, thus being expensive and not scalable. 
To enable efficient training of diffusion policies in online RL, we generalize the conventional denoising score matching by reweighting the loss function. The resulting \algname~(\algabb) preserves the optimal solution and low computational cost of denoising score matching, while eliminating the need to sample from the target distribution and allowing learning to optimize value functions. We introduce two tractable reweighted loss functions to solve two commonly used policy optimization problems, policy mirror descent and max-entropy policy, resulting in two practical algorithms named {\algnamey~(\algabby)} and \algnamen~(\algabbn).
We conducted comprehensive comparisons on MuJoCo benchmarks. The empirical results show that the proposed algorithms outperform recent diffusion-policy online RLs on most tasks, and the \algabby improves more than 120\% over soft actor-critic on Humanoid and Ant.

\end{abstract}

\setlength{\abovedisplayskip}{2pt}
\setlength{\abovedisplayshortskip}{2pt}
\setlength{\belowdisplayskip}{2pt}
\setlength{\belowdisplayshortskip}{1pt}
\setlength{\jot}{1pt}
\setlength{\floatsep}{1ex}
\setlength{\textfloatsep}{1ex}

\section{Introduction}
Many successes of diffusion-based generative models have been witnessed recently~\cite{sohl-dickstein2015deep,song2019generative,ho2020denoising}. With the iterative denoising design, diffusion models achieved superior expressiveness and multimodality in representing complex probability distributions, demonstrating remarkable performance in image and video generation~\cite{ramesh2021zero,saharia2022photorealistic}. The superior expressiveness and multimodality naturally benefit the policies in sequential decision-making problems. In fact, diffusion policy has been introduced in imitation learning and offline reinforcement learning (RL), where expert datasets are presented. Diffusion policies improved significantly over previous deterministic or unimodal policies on manipulation~\cite{chi2023diffusion,ke20243d,scheikl2024movement} and locomotion tasks~\cite{huang2024diffuseloco}.



Meanwhile, online RL has been long seeking expressive policy families. One promising direction lies in energy-based models (EBMs)—a class of probabilistic models that represent distributions via unnormalized densities. When applied to RL, energy-based policies—policies modeled as EBMs—have been shown to arise as optimal solutions in proximity-based policy optimization ~\citep{nachum2017bridging,mei2020global} and max-entropy RL~\citep{neu2017unified,haarnoja2017reinforcement}. 
Despite their theoretical appeal, training and sampling from such unnormalized models in continuous action spaces are notoriously difficult due to their intractable likelihood~\cite{song2021train}. To mitigate this, a variety of probabilistic models have been introduced for efficient sampling and learning, but with the cost of approximation error. 
In practice, many algorithms~\cite{schulman2015trust,schulman2017proximal,haarnoja2018soft,hansen2023idql} project the energy-based policies onto the Gaussian policies. However, this projection severely limits the expressiveness of the original energy-based formulations, often resulting in degraded performance.

Diffusion models are closely related to EBMs, as they can be regarded as EBMs perturbed by a series of noise~\cite{song2019generative, shribak2024diffusion}, thus being the perfect candidate to represent energy-based policies in RL. 
Unfortunately, it is highly non-trivial to train diffusion policies in online RL. The commonly used diffusion model training procedure, denoising score matching~\cite{ho2020denoising}, requires data samples from the target data distribution (usually a large image dataset in image generation). However, we cannot sample from the optimal policy in online RL, where the policy is learned by optimizing the returns or value functions. There exist several preliminary studies trying to bypass the sampling issue~\cite{psenka2023learning,jain2024sampling,yang2023policy,wang2024diffusion,ding2024diffusion,ren2024diffusion}, but all these methods suffer from biased estimations and/or huge memory and computation costs, resulting in suboptimal policies and limiting the true potential of diffusion policies in online RL.


To handle these challenges, we propose to generalize diffusion model training by reweighting the conventional denoising score matching loss, resulting in two efficient algorithms to train diffusion policies in online RL without sampling from optimal policies. 
Specifically, 
\begin{itemize}[leftmargin=4pt, parsep=1pt]
    \item Building upon the viewpoint of diffusion models as noise-perturbed EBMs, we propose \algname~(\algabb), a family of loss functions to train diffusion models, which generalizes the denoising score matching by reweighting the loss function while preserving the optimal solution as noise-perturbed EBMs.
    
    \item \algabb leads to computationally tractable and efficient algorithms to train diffusion policies in online RL. We show that, by choosing different reweighting functions, we can train diffusion policies to solve two policy optimization problems, \emph{policy mirror descent and max-entropy policy},
    resulting in two practical algorithms named {\algnamey~(\algabby)} and \algnamen~(\algabbn).
    Both problems are commonly seen in theoretical studies but empirically challenging in the continuous action space, and the proposed algorithms bridge this gap between the theory and practice of online RL. 
    
    \item We conduct extensive empirical evaluation on MuJoCo, showing that the proposed algorithms outperform recent diffusion-based online RL baselines in most tasks. Moreover, both algorithms improve more than 100\% over SAC on Humanoid and \algabby improves more than 100\% over SAC on Ant, demonstrating the potential of diffusion policy in online RL.
\end{itemize}


\section{Preliminaries}
We introduce the necessary preliminaries in this section. First, we introduce reinforcement learning and two commonly seen policy optimization problems in online RL. Then we briefly recap the diffusion models and energy-based models.

\subsection{Reinforcement Learning}
\paragraph{Markov Decision Processes (MDPs).}We consider Markov decision process~\citep{puterman2014markov} specified by a tuple $\mathcal{M}=(\mathcal{S}, \mathcal{A}, r, P, \mu_0, \gamma)$, where $\mathcal{S}$ is the state space, $\mathcal{A}$ is the action space, $r:\Scal\times\Acal\to\RR$ is a reward function,
$P: \mathcal{S} \times \mathcal{A} \rightarrow \Delta(\mathcal{S})$ is the transition operator with $\Delta(\mathcal{S})$ as the family of distributions over $\mathcal{S}, \mu_0 \in \Delta(\mathcal{S})$ is the initial distribution and $\gamma \in(0,1)$ is the discount factor. 
We consider two types of commonly seen policy optimization problems in RL, (a) Policy mirror descent and (b) Max-entropy policy. 

\paragraph{Policy Mirror Descent} is closely related to practical proximity-based algorithms such as TRPO \cite{schulman2015trust} and PPO \cite{schulman2017proximal}, but with a different approach to enforce the proximity constraints. We consider policy mirror descent with Kullback–Leibler (KL) divergence proximal term \cite{tomar2021mirrordescentpolicyoptimization, lan2023policy, peters2010relative} updates the policy with 
\begin{equation}\label{eq:pmd}
    \pi_{\rm MD}\!\rbr{\ab | \sbb}\!=\!\!\argmax_{\pi: \Scal\rightarrow \Delta\rbr{\Acal}}\EE_{\ab \sim \pi}\sbr{Q^{\pi_{\rm old}}(\sbb, \ab)}-\lambda D_{KL}\rbr{\pi||\pi_\text{old};\sbb}
\end{equation}
where $Q^{\pi_{\rm old}}(s, a)=\EE_{\pi_{\rm old}}[\sum_{\tau=0}^\infty \gamma^\tau r(\sbb_t,\ab_t)|\sbb_0=s,\ab_0=a]$ is the state-action value function 
 and $\pi_{\text{old}}$ is the current policy. The additional KL divergence objective constrains the updated policy to be approximately within the trust region. 
The closed-form solution of policy mirror descent \eqref{eq:pmd} satisfies 
\begin{equation}\label{eq:pwd closed form}
\pi_{\rm MD}\rbr{\ab|\sbb}=\pi_{\text{old}}\rbr{\ab|\sbb}\frac{\exp\rbr{Q^{\pi_{\rm old}}\rbr{\sbb, \ab}/\lambda}}{Z_{\rm MD}(\sbb)},
\end{equation}
and $Z_{\rm MD}(\sbb)=\int \pi_{\text{old}}\rbr{\ab|\sbb}\exp\rbr{Q\rbr{\sbb, \ab}/\lambda}d\ab$ is the partition function.

\textbf{Max-entropy RL. }
Maximum entropy RL considers the entropy-regularized expected return as the policy learning objective to justify the optimal stochastic policy
\begin{equation}
    \arg\max_\pi J(\pi) := \EE_{\pi}\sbr{\sum_{\tau=0}^\infty\gamma^\tau \rbr{r(\sbb_\tau, \ab_\tau) + \lambda\Hcal(\pi(\cdot | \sbb_\tau))}}\label{eq:obj_max_ent_rl}
\end{equation}
where $\Hcal\rbr{\pi(\cdot| \sbb)} = \EE_{\ab\sim\pi(\cdot | \sbb)}[-\log\pi(\ab | \sbb)]$ is the entropy, $\lambda$ is a regularization coefficient for the entropy.
The soft policy iteration algorithm~\cite{haarnoja2017reinforcement} is proposed to solve the optimal max-entropy policy. Soft policy iteration algorithm iteratively conducts soft policy evaluation and soft policy improvement, where soft policy evaluation updates the soft $Q$-function by repeatedly applying soft Bellman update operator $\Tcal^\pi$ to current value function $Q:\Scal\times\Acal\to \RR$, \ie, 
\begin{equation}
    \Tcal^\pi Q(\sbb_\tau, \ab_\tau) = r(\sbb_\tau, \ab_\tau) + \gamma\EE_{\sbb_{\tau+1}\sim P}\sbr{V(\sbb_{\tau+1})}\label{eq:soft_pev}
\end{equation}
where $V(\sbb_\tau) = \EE_{\ab_\tau\sim\pi}\sbr{Q(\sbb_\tau, \ab_\tau) - \lambda \log\pi(\ab_\tau\mid \sbb_\tau)}$. Then in the soft policy improvement stage, the policy is updated to fit the target max-entropy policy
\begin{equation}
    \pi_{\rm MaxEnt}(\ab| \sbb) 
     = \frac{\exp\rbr{Q^{\pi_{\rm old}}\rbr{\sbb, \ab}/\lambda}}{Z(\sbb)} \label{eq:energy_based_opt_pi}
\end{equation}
where $Q^{\pi_{\rm old}}(\sbb,\ab)$ is the converged result of \eqref{eq:soft_pev} with $\Tcal^{\pi_{\rm old}}$, $Z(\sbb) = \int\exp(Q^{\pi_{\rm old}}(\sbb,\ab)/\lambda)d\ab$. Max-entropy RL 
shows a foundational concept in the exploration-exploitation trade-off with stochastic policies, leading to practical algorithms with strong performance even with the restrictive Gaussian policies such as soft actor-critic~\cite{haarnoja2018soft}.


\subsection{Energy-Based Models} 
The closed-form solutions of both policy mirror descent \eqref{eq:pwd closed form} and max-entropy policy \eqref{eq:energy_based_opt_pi} have unknown normalization constants. Such probabilistic models with unknown normalization constants are known as
energy-based models~(EBMs), whose density functions can be abstracted as
$$p_0(\xb) = \frac{\exp\rbr{ - E(\xb)}}{Z}$$ 
where  $Z=\int \exp\rbr{ - E(\xb)}d\xb$ is the unknown normalization constant or partition function. We only know the \emph{energy functions} $E(\xb)=-\log p_0(\xb)$, \ie, the negative log density. The gradient of log density $\nabla_\xb\log p_0(\xb)=-\nabla_{\xb} E(\xb)$ is called the \emph{score functions}.

The unknown normalization constants $Z$ raise difficulties in training and sampling of EBMs~\cite{song2021train}. 
One of the commonly used approaches is the \emph{score-based methods}, which first learns the score function via score matching~\cite{hyvarinen2005estimation,song2020sliced} and then draws samples via Markov chain Monte Carlo (MCMC) such as Langevin dynamics with the learned score functions~\cite{neal2011mcmc}. However, the MCMC sampling is inefficient due to the lack of finite-time guarantees, preventing score-based EBMs from being widely used in practice.

In the practice of online RL, projection onto Gaussian policies is commonly used in policy optimization with EBMs. For example, the well-known soft actor-critic~\citep[SAC,][]{haarnoja2018soft} parameterize the policy as Guassian $\pi_\theta\rbr{a|s} = \Ncal\rbr{ \mu_{\theta_1}(s), \sigma^2_{\theta_2}(s)}$ and updates the parameters $\theta = [\theta_1, \theta_2]$ by optimizing the $KL$-divergence to the target max-entropy policy $\min_\theta D_{\rm KL}(\pi_\theta\|\pi_{\rm MaxEnt})$. However, the projection loses expressiveness, and the resulting policies might be sub-optimal, leaving a huge gap between theory and practice of energy-based policies.

\subsection{Denoising Diffusion Probabilistic Models} 
\label{sec.diffusion}
Denoising diffusion probabilistic models~\citep[DDPMs,][]{sohl-dickstein2015deep,ho2020denoising} are composed of a forward diffusion process that gradually perturbs the data distribution $\xb_0\sim p_0$ to a noise distribution $\xb_T\sim p_T$, and a reverse diffusion process that reconstructs the data distribution $p_0$ from the noise distribution $p_T$.
 The forward corruption kernels are Gaussian with a variance schedule $\beta_1,\dots,\beta_T$, resulting in the forward trajectories with joint distributions
\vspace{-2pt}
$$
q_{{0:T}}(\xb_{0:T}) = p_0(\xb_0)\prod_{t=1}^Tq_{t|t-1}\rbr{\xb_t|\xb_{t-1}} 
$$
where $\quad q_{t|t-1}(\xb_t|\xb_{t-1}) := \Ncal(\xb_t;\sqrt{1-\beta_t}\xb_{t-1},\beta_t\Ib)$, $\xb_t$ is the perturbed data at $t$ step, and $p, q$ are probability distributions\footnote{We use $p$ and $q$ interchangeably as density function in this paper. Generally, $p$ represents intractable distributions (like the t-step marginal $p_t(\xb_t)$), and $q$ represents tractable distributions such as the Gaussian corruption $q_{t\mid t-1}(\xb_t|\xb_{t-1})$.}. As the perturbations at every step are independent and additive Gaussian, we can directly sample the $t-$step perturbed data $\xb_t$ by
\begin{equation}
    \begin{aligned}
        & \xb_t \sim p_t(\xb_t) = \int p_0(\xb_0)q_{t|0}(\xb_t|\xb_0)d\xb_0,\ \\
    \text{where} & \ q_{t|0}(\xb_t|\xb_0):=\Ncal(\xb_t;\sqrt{\bar\alpha_t}\xb_0,\rbr{1 - \bar\alpha_t}\Ib), \\
    & \ \bar\alpha_t = \prod_{l=1}^t (1-\beta_l)
    \end{aligned}
\end{equation}
The backward process recovers the data distribution from a noise distribution $p_T$ with a series of reverse kernels $p_{t-1|t}(\xb_{t-1}|\xb_t)$. 
The kernel of reverse process is also Gaussian and can be parametrized as $\Ncal\rbr{\frac{1}{\sqrt{\alpha_t}}\rbr{\xb_t +\beta_t s_\theta(\xb_t;t)},\sigma_t^2\Ib}$ with score networks $s_\theta(\xb_t, t)$ and fixed covariance $\sigma_t=\frac{1-\bar\alpha_{t-1}}{1-\bar\alpha_t}\beta_t$
The score network $s_\theta(\xb_t;t)$ is trained by the denoising score matching to match the forward and reverse processes~\cite{ho2020denoising}, whose loss function is
\begin{equation}
    \begin{aligned}
        \frac{1}{T}\sum_{t=0}^T(1 -\bar\alpha_t)\underset{\substack{\xb_0\sim p_0\\\xb_t\sim q_{t|0}}}{\EE}\sbr{\nbr{ s_\theta\rbr{\xb_t; t} - \nabla_{\xb_t} \log q_{t|0}(\xb_t|\xb_0)}^2}.\label{eq:ddpm_loss}
    \end{aligned}
\end{equation}
 The score function $\nabla_{\xb_t} \log q_{t|0}(\xb_t|\xb_0)$ 
 can be computed from the sampled Gaussian noise perturbing $\ab_0$ to be $\ab_t$, thus the loss in~\eqref{eq:ddpm_loss} is tractable and easy to implement. 
After learning the $s_\theta$ via \eqref{eq:ddpm_loss}, we can draw samples via the reverse diffusion process by the iterative formulation
\begin{equation}
    \xb_{t-1}=\frac{1}{\sqrt{\alpha_t}}\left(\xb_t+\beta_t s_\theta\left(\xb_t, t\right)\right)+\frac{1-\bar\alpha_{t-1}}{1-\bar\alpha_t}\beta_t \zb_t\label{eq:annealed_langevin_2}
\end{equation}
for $t = T, T-1,\dots, 1$ and $\zb_t\sim\Ncal\rbr{0,\Ib}$.

\begin{figure*}[ht]
    \centering
    \includegraphics[width=\linewidth]{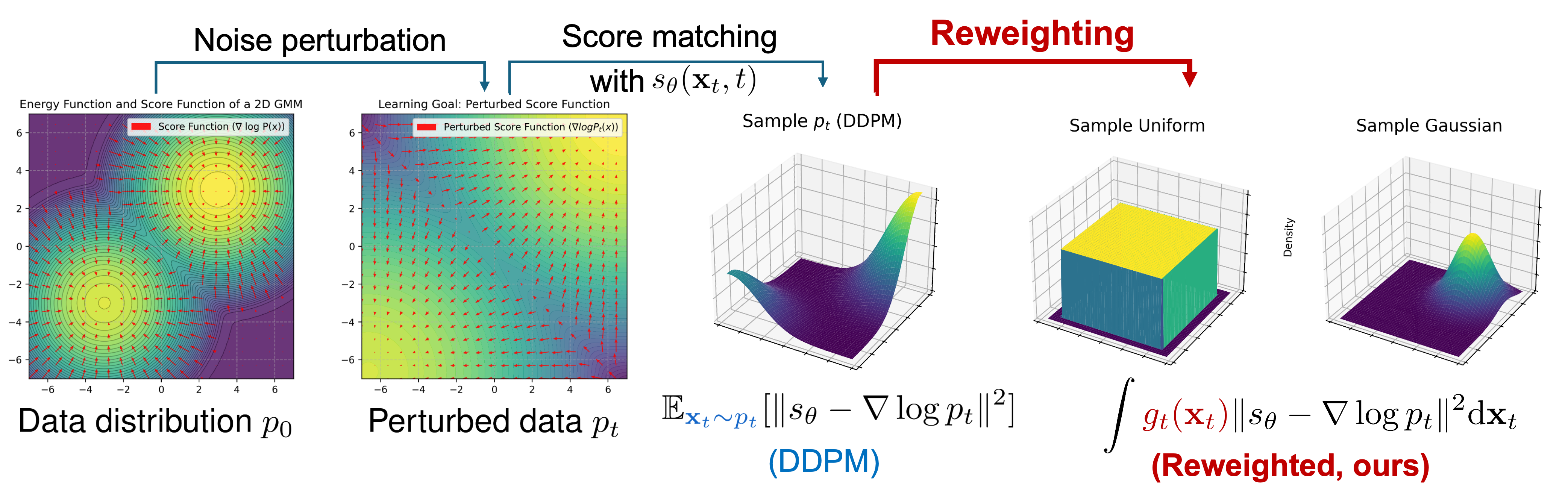}
    \vspace{-10pt}
    \caption{Diffusion model aims to match score network $s_\theta(\xb_t,t)$ with noise-perturbed score function $\nabla_{\xb_t}\log p_t(\xb_t)$ by minimizing the expectation of error L2-norm $\nbr{s_\theta(\xb_t,t)-\nabla_{\xb_t}\log p_t(\xb_t)}^2$ over distribution $p_t$. \algabb generalize to other weight function $g_t$ to enable diffusion policy training in online RL.}
    \label{fig:demo}
\end{figure*}
\section{Reweighted Score Matching: A General Loss Family for Diffusion Models}
\label{sec:rssm}

In this section, we first present the connection between energy-based models and diffusion models, justifying the expressiveness of diffusion policy to represent energy-based policies such as~\eqref{eq:pwd closed form},\eqref{eq:energy_based_opt_pi}. Then we identify the difficulties in training of diffusion policy in the context of online RL, where the conventional denoising score matching is intractable. 
To mitigate this, we propose our core contribution, \algname~(\algabb), by reweighting the denoising score matching loss.  

\textbf{Notation. } To fit the diffusion policy context, we consider the diffusion policy notations, where the diffusion is on actions $\ab$ conditioned on state $\sbb$. The score function has the additional input of states $s_\theta(\ab_t;\sbb,t)$. The data distribution $p_0(\cdot|\sbb)$ refers to the policy in~\eqref{eq:pwd closed form} or \eqref{eq:energy_based_opt_pi}, and we will explicitly mention which one we refer to as $p_0$ if needed.

\subsection{Diffusion Models as Noise-Perturbed Energy-Based Models} 
\label{sec.3.1.ebms}
We first revisit the energy-based view of diffusion models, \ie, \emph{diffusion models are noise-perturbed EBMs}~\cite{song2019generative,shribak2024diffusion}, to justify that the diffusion policy can efficiently represent the energy-based policies.

\begin{restatable}[Diffusion models as noise-perturbed EBMs\label{prop:diff_ebm}]{proposition}{diffebm}
    The optimal solution ${\theta^*}$ of DDPM loss function \eqref{eq:ddpm_loss} is achieved when the following holds for all $\ab_t$
    \begin{center}
        $s_{\theta^*}(\ab_t;\sbb, t)=\nabla_{\ab_t}\log p_t(\ab_t|\sbb)$
    \end{center}
    where $p_t(\ab_t|\sbb)= \int q_{t|0}(\ab_t|\ab_0)p_0(\ab_0|\sbb)d\ab_0$ is the noise-perturbed policy with  
     perturbation kernel $q_{t|0}(\ab_t|\ab_0)=\Ncal(\ab_t;\sqrt{\bar\alpha_t}\ab_0, \rbr{1 - \bar\alpha_t}\Ib)$, 
for noise schedule index $t=1,2,\dots, T$.
     \end{restatable}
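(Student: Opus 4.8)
The plan is to exploit the structure of the DDPM objective \eqref{eq:ddpm_loss}: it is a nonnegatively weighted sum over $t$ of individual denoising losses, with weights $(1-\bar\alpha_t)/T$; for $t\ge 1$ these weights are strictly positive (the $t=0$ term vanishes since $\bar\alpha_0=1$). Because the weights are constants independent of $\theta$ and the score network $s_\theta(\ab_t;\sbb,t)$ at distinct noise levels can be tuned independently, it suffices to minimize each summand separately. Fixing $t$ and $\sbb$, I would view the inner expectation as a weighted $L^2$-regression in which $s_\theta(\cdot;\sbb,t)$ is the free function and the target is $\nabla_{\ab_t}\log q_{t|0}(\ab_t|\ab_0)$.

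The key step is the standard fact that for any problem of the form $\min_f \EE\,\nbr{f(\ab_t) - Y}^2$, the minimizer is the conditional expectation $f^*(\ab_t) = \EE[Y\mid \ab_t]$. Here the joint law of $(\ab_0,\ab_t)$ is $p_0(\ab_0|\sbb)\,q_{t|0}(\ab_t|\ab_0)$, so the optimal score is $s_{\theta^*}(\ab_t;\sbb,t) = \EE_{\ab_0\sim p(\cdot|\ab_t,\sbb)}\sbr{\nabla_{\ab_t}\log q_{t|0}(\ab_t|\ab_0)}$, with posterior $p(\ab_0|\ab_t,\sbb) = p_0(\ab_0|\sbb)\,q_{t|0}(\ab_t|\ab_0)/p_t(\ab_t|\sbb)$. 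This implicitly requires the network to be expressive enough to realize the pointwise optimum for $p_t$-almost every $\ab_t$, which I take to be the intended meaning of ``optimal solution $\theta^*$.''

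To finish, I would simplify this conditional expectation using the identity $q_{t|0}\,\nabla_{\ab_t}\log q_{t|0} = \nabla_{\ab_t} q_{t|0}$. Substituting and pulling $1/p_t(\ab_t|\sbb)$ out of the $\ab_0$-integral yields $\tfrac{1}{p_t(\ab_t|\sbb)}\int p_0(\ab_0|\sbb)\,\nabla_{\ab_t} q_{t|0}(\ab_t|\ab_0)\,d\ab_0$. Interchanging gradient and integral, the integrand collapses to $\nabla_{\ab_t} p_t(\ab_t|\sbb)$ by the definition $p_t(\ab_t|\sbb)=\int q_{t|0}(\ab_t|\ab_0)p_0(\ab_0|\sbb)d\ab_0$, so the whole expression equals $\nabla_{\ab_t}\log p_t(\ab_t|\sbb)$, as claimed.

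The main obstacle is the interchange of differentiation and integration in the last step, which formally needs a dominated-convergence argument. This is routine here because for $t\ge 1$ the Gaussian kernel $q_{t|0}$ (with variance $1-\bar\alpha_t>0$) is smooth with rapidly decaying $\ab_t$-derivatives, so mild integrability of $p_0(\cdot|\sbb)$ suffices. A secondary subtlety worth flagging is the expressiveness assumption needed to attain the pointwise optimum; without it one recovers only the best approximation within the parametric family rather than the exact marginal score.
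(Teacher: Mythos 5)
Your proposal is correct and is essentially the paper's own argument in different packaging: where you invoke the $L^2$-projection property (the minimizer of $\min_f \EE\nbr{f(\ab_t)-Y}^2$ is the conditional expectation) and then compute that conditional expectation, the paper fixes $t$, establishes the same Tweedie-type identity $\EE_{\ab_0\sim q_{0|t}(\cdot|\ab_t,\sbb)}\sbr{\nabla_{\ab_t}\log q_{t|0}(\ab_t|\ab_0)} = \nabla_{\ab_t}\log p_t(\ab_t|\sbb)$ via exactly your gradient--integral interchange, and then expands the square to show the denoising loss \eqref{eq:dsm_loss} equals the marginal-score-matching loss \eqref{eq:vsm_loss} up to a $\theta$-independent constant---which is just the projection property proved by hand. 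Your explicit observations that the $t=0$ term vanishes (since $\bar\alpha_0=1$) and that pointwise attainment requires an expressiveness assumption on $s_\theta$ are correct refinements of points the paper leaves implicit.
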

\begin{proof}
    Given time index $t$, according to the Tweedie's identity~\cite{efron2011tweedie}, we have
    \begin{equation}
        \nabla_{\ab_t}\log p_t(\ab_t|\sbb) = \EE_{\ab_0\sim q_{0|t}(\cdot|\ab_t,\sbb)}\sbr{\nabla_{\ab_t}\log q_{t|0}(\ab_t|\ab_0)}\label{eq:tweedie}
    \end{equation}
    where $q_{0|t}(\ab_0|\ab_t,\sbb) = \frac{q_{t|0}(\ab_t|\ab_0)p_0(\ab_0|\sbb)}{p_t(\ab_t|\sbb)}$. \eqref{eq:tweedie} can be verified by
    $$
    \begin{aligned}
        &~\nabla_{\ab_t}\log p_t(\ab_t|\sbb) =  \frac{\nabla_{\ab_t}p_t(\ab_t|\sbb)}{p_t(\ab_t|\sbb)}\\
        = & ~\frac{\nabla_{\ab_t} \int q_{t|0}(\ab_t|\ab_0) p_0(\ab_0|\sbb)d\ab_0}{p_t(\ab_t|\sbb)}\\
        = & ~\frac{\int \nabla_{\ab_t} \log q_{t|0}(\ab_t|\ab_0) q_{t|0}(\ab_t|\ab_0) p_0(\ab_0|\sbb)d\ab_0}{p_t(\ab_t|\sbb)}\\
        = & ~\EE_{\ab_0\sim q_{0|t}(\cdot|\ab_t,\sbb)}\sbr{\nabla_{\ab_t}\log q_{t|0}(\ab_t|\ab_0)}
    \end{aligned}
    $$
    
    Then we use the score network to match both sides of~\eqref{eq:tweedie} and take the expectation of error norm over $p_t(\ab_t|\sbb)$, we have
    \begin{align}
         & \underset{\ab_t\sim p_t(\cdot|\sbb)}{\EE}\sbr{\nbr{s_\theta(\ab_t;\sbb,t)\!-\!\nabla_{\ab_t}\log p_t(\ab_t|\sbb)}^2}~\label{eq:vsm_loss}\\
        = & \underset{\substack{\ab_t\sim p_t(\cdot|s)\\\ab_t\sim q_{0|t}(\cdot|\ab_t)}}{\EE}\sbr{\nbr{ s_\theta\rbr{\ab_t;\sbb, t} - \nabla_{\xb_t} \log q_{t|0}(\ab_t|\ab_0)}^2}~\label{eq:dsm_loss}\\
        & + \texttt{constant}\notag
    \end{align}

    
    where \texttt{constant} is a constant irrelevant with $\theta$.
    The detailed derivations are deferred to \Cref{sec:prop_1_apdx}. 
    
    Note that the optimal solution of LHS~\eqref{eq:vsm_loss} is $s_{\theta^*}(\ab_t;\sbb, t)=\nabla_{\ab_t}\log p_t(\ab_t|\sbb)$ anywhere on the $\ab_t$ space. Moreover, the RHS~\eqref{eq:dsm_loss} and LHS~\eqref{eq:vsm_loss} only differ by a constant, so they share the same optimal $\theta^*$. As RHS \eqref{eq:dsm_loss} is exactly the $t^{\rm th}$ term in the DDPM loss function~\eqref{eq:ddpm_loss}, we can verify that $s_{\theta^*}(\ab_t;\sbb, t)=\nabla_{\ab_t}\log p_t(\ab_t|\sbb)$ anywhere on the $\ab_t$ space is the optimal solution optimizing the DDPM loss~\eqref{eq:ddpm_loss}.
\end{proof}

\Cref{prop:diff_ebm} indicates that the underlying learning target of the score function $s_\theta(\ab_t;\sbb,t)$ is the\emph{ noise-perturbed score functions $\nabla_{\ab_t}\log p_t(\ab_t|\sbb)$}. At the sampling stage, as the noise gets close to zero when $t$ goes from $T$ to $1$ in the reverse process~\eqref{eq:annealed_langevin_2}, the noise-perturbed EBMs gradually resemble the original noiseless target data distribution $p_0$,  Therefore, diffusion models can be regarded as a series of noise-perturbed EBMs\footnote{The reason to add noise perturbations in score functions is to encourage exploration on the energy landscape, which significantly improves the sampling quality and makes diffusion-like models the key breakthrough in   EBMs~\cite{song2019generative}.}, and we can use the diffusion policy to express the energy-based policies such as~\eqref{eq:pwd closed form} and~\eqref{eq:energy_based_opt_pi}.

\textbf{Revisiting the challenges in online RL.}
Proposition \ref{prop:diff_ebm} shows that the DSM loss~\eqref{eq:dsm_loss} is a tractable and efficient way to train the score network to match $\nabla_{\ab_t}\log p_t$ when we have access to samples from $p_0$. However, training diffusion policy is highly non-trivial in online RL because of two major challenges:

$\bullet$ \textbf{Sampling challenge:} In online RL, we do not have data samples from the policies such as~\eqref{eq:pwd closed form} or \eqref{eq:energy_based_opt_pi}, the DSM loss~\eqref{eq:dsm_loss} is no longer tractable.

$\bullet$ \textbf{Computational challenge:} Another possible solution is to treat the reverse process as policy parameterizations and backpropagate policy gradient through the whole reverse diffusion process~\eqref{eq:annealed_langevin_2} like~\citet{wang2024diffusion,celik2025dime}. However, this recursive gradient propagation not only incurs huge computational and memory costs, making diffusion policy learning expensive and unstable. Moreover, this policy parametrization viewpoint is limited to the max-entropy policy formulation~\eqref{eq:energy_based_opt_pi}.

These challenges hinder the feasibility and performance of diffusion-based policies in online RL. We need a principal way to train diffusion policies when we have the energy function as partial or full knowledge of the data distribution, which we reveal in the following.

\subsection{Reweighted Score Matching}
\label{subsec:main_theorem}

We develop our core contribution, \algname~(\algabb), a general loss family leading to efficient diffusion policy learning algorithms that eliminate the aforementioned difficulties. 

A key observation from VSM loss~\eqref{eq:vsm_loss} is that, integrating the square error $\nbr{s_\theta(\ab_t;\sbb, t) -\nabla_{\ab_t}\log p_t(\ab_t|\sbb)}^2$ over distribution $p_t(\cdot|\sbb)$ is not the only option to perform score matching that matches $s_\theta(\ab_t,\sbb,t)$ with $\nabla_{\ab_t}\log p_t(\ab_t|\sbb)$. Our core idea is to generalize this by reweighting the VSM loss~\eqref{eq:vsm_loss}, allowing integrations with respect to any strictly positive function $g(\ab_t;\sbb): \Acal\times\Scal\to(0,\infty)$ as long as the following loss function is well-defined,
\begin{equation}
    \Lcal^g(\theta;\sbb,t)\! = \!\!\int \!g(\ab_t;\sbb)\!\nbr{s_\theta(\ab_t;\sbb, t) -\nabla_{\ab_t}\!\log p_t\!(\ab_t|\sbb)}^2 \!d\ab_t\label{eq:general_loss_g}
\end{equation}
where the superscript $g$ indicates the reweighting function.
The optimal solution of minimizing $\Lcal^g(\theta)$ remains the same as matching $\nabla_{\ab_t}\log p_t(\ab_t|\sbb)$ everywhere on $\ab_t$ space in \Cref{prop:diff_ebm}, $s_\theta(\ab_t;\sbb,t)$. \eqref{eq:general_loss_g} indicates a general loss family, where the VSM loss in $\eqref{eq:vsm_loss}$ lies in it as $\Lcal^{p_t}(\theta;\sbb, t)$.

The reweighting technique gives us more flexibility in loss function design. We will show tractable equivalent formulations in the next section.

\section{Diffusion Policy Optimization using Reweighted Score Matching}
In this section, we show two different reweighting functions, with which the loss $\Lcal^g$ can be converted to tractable loss functions, to train diffusion policies to represent both the mirror descent policy~\eqref{eq:pwd closed form} and softmax policy~\eqref{eq:energy_based_opt_pi}. We also discuss practical issues, such as the exploration-exploitation tradeoff, sampling distributions.

\subsection{Tractable Reweighted Loss Functions}
\label{sec.algs}
\subsubsection{Diffusion Policy Mirror Descent}

Consider the mirror descent policy $\pi_{\rm MD}(\cdot|\sbb)$ in~\eqref{eq:pwd closed form} and set $p_0(\cdot|\sbb)=\pi_{\rm MD}(\cdot|\sbb)$, we define the reweighting function as 
$$
g_{\rm MD} = Z_{\rm MD}(\sbb) p_t(\ab_t|\sbb)
$$
where $Z_{\rm MD}(\sbb)=\int \pi_{\rm old}(\ab|\sbb)\exp \rbr{Q\rbr{\sbb,\ab}/\lambda} d\ab$. 
Then we can show the reweighted loss $\Lcal^{g_{\rm MD}}(\theta;\sbb, t)$ is tractable via the following derivation,
\begin{equation}
    \begin{aligned}
        &\quad\Lcal^{g_{\rm MD}}(\theta;\sbb, t) \\
        &= \int g_{\rm MD}(\ab_t;\sbb)\nbr{s_\theta(\ab_t;\sbb, t) -\nabla_{\ab_t}\log p_t(\ab_t|\sbb)}^2 da_t = \\
      & \!
        \underbrace{\underset{\substack{\ab_0\sim \pi_{\rm old}\\ \ab_t\sim q_{t|0}}}{\EE}\!\!\sbr{\!\exp \!\rbr{\!\frac{Q\rbr{\sbb,\ab_0}}{\lambda}\!}\!\nbr{s_\theta\rbr{\ab_t; \sbb, t} \!-\! \nabla_{\ab_t}\!\log  \!q_{t|0}\!\rbr{\ab_t | \ab_0}}^2\!}}_{\Lcal_{\rm DPMD}(\theta,\sbb,t)}\\
&+\texttt{constant}
    \end{aligned}\label{eq:loss_mirror_descent}
\end{equation}
where $\Lcal_{\rm DPMD}(\theta,\sbb,t)$ is tractable through unbiased sampling-based approximation.
 The derivation is similar to~\Cref{prop:diff_ebm}, and we defer it to \Cref{sec:appendix_derivation}.

\subsubsection{Soft Diffusion Actor-Critic}
The max-entropy policy $\pi_{\rm MaxEnt}$ in~\eqref{eq:energy_based_opt_pi} is more challenging as we only know the energy function. It is also closely related to the Boltzmann sampling problem~\cite{akhound2024iterated,midgley2022flow}. We need a special sampling protocol to handle it. First, we define the reweighting function as 
$$
g_{\rm MaxEnt} = h_t(\ab_t|\sbb)Z(\sbb) p_t(\ab_t|\sbb)
$$
where $h_t(\ab_t|\sbb)$ is a sampling distribution \emph{we choose}. We require $h_t(\ab_t|\sbb)$ to have full support on $\ab_t$ space.
Then we can show the following equivalence,
\begin{equation}
\begin{aligned}
    &\Lcal^{g_{\rm MaxEnt}}(\theta;\sbb, t) \\
    = &\int g_{\rm MaxEnt}(\ab_t;\sbb)\nbr{s_\theta(\ab_t;\sbb, t) -\nabla_{\ab_t}\log p_t(\ab_t|\sbb)}^2 da_t \\
    =& \texttt{constant}\times\\
    &\!\!\!\!\!\!\!
    \underbrace{\underset{\substack{\ab_t\sim h_t\\ \tilde\ab_0\sim \phi_{0|t}}}{\EE}\!\! \!\sbr{\!\exp \rbr{\!\frac{Q\rbr{\sbb, \tilde\ab_0}}{\lambda}\!}\!\nbr{s_\theta\rbr{\ab_t; \sbb, t} \!-\! \nabla_{\ab_t}\!\log  \phi_{0|t}\!\rbr{\tilde\ab_0 \! \mid \! \ab_t}}^2\!} }_{\Lcal_{\rm SDAC}(\theta,\sbb, t)}\\
    & + \texttt{constant}
\end{aligned}
     \label{eq:loss_softmax}
\end{equation}
where $\phi_{0|t}$ is a conditional Gaussian distribution defined as 
\begin{equation}
    \phi_{0|t} (\tilde\ab_0|\ab_t):=\Ncal\rbr{\tilde\ab_0;\frac{1}{\sqrt{\bar\alpha_t}}\ab_t, \frac{1 - \bar\alpha_t}{\bar\alpha_t}\Ib}.\label{eq:a0_sample_thm}
\end{equation}
The reason we introduce $\phi_{0|t} (\tilde\ab_0|\ab_t)$ is to use the following reverse sampling trick.

\begin{remark}(Reverse sampling trick.) 
The density functions of $q_{t|0}$ and $\phi_{0|t}$ are
    $$
    \begin{aligned}
        &\phi_{0|t} (\ab_0|\ab_t)  = \left(2 \pi \frac{1-\bar{\alpha}_t}{\bar{\alpha}_t}\right)^{-d / 2} \exp\rbr{\frac{\rbr{\ab_t-\bar\alpha_t \ab_0}^2}{-2\rbr{1 - \bar\alpha_t}}}, \\
        &q_{t|0}(\ab_t|\ab_0) = \left(2 \pi \rbr{1-\bar{\alpha}_t}\right)^{-d / 2}  \exp\rbr{\frac{\rbr{\ab_t-\bar\alpha_t \ab_0}^2}{-2\rbr{1 - \bar\alpha_t}}}
    \end{aligned}
    $$
    where these two density functions only differ by a constant.
    We show an abstract example of the reverse sampling trick here. Consider the following integral that is well-defined
    $$
    \begin{aligned}
        J(\sbb):=&\int h_t(\ab_t|\sbb)p_0(\ab|\sbb){q_{t|0}(\ab_t|\ab_0)} l(\ab_t,\ab_0;\sbb) d\ab_0d\ab_t \\
        =~& \EE_{\ab_0\sim p_0,\ab_t\sim q_{t|0}}\sbr{h(\ab_t|\sbb)l(\ab_t,\ab_0;\sbb) }
    \end{aligned}
    $$
    where $l:\Acal\times \Acal\times\Scal\to\RR$ is an integrable function.
    Notice that we can equivalently compute the integral $J(\sbb)$ by another expectation,
    $$
    \begin{aligned}
        J(\sbb)\propto& \int h_t(\ab_t|\sbb)p_0(\ab|\sbb){\phi_{0|t}(\ab_0|\ab_t)} l(\ab_t,\ab_0;\sbb) d\ab_0d\ab_t \\
        = &  \EE_{\ab_t\sim h,\ab_0\sim\phi_{0|t}}\sbr{p_0(\ab|\sbb)l(\ab_t,\ab_0;\sbb)}.
    \end{aligned}
    $$
    This trick helps bypass sampling from $p_0$ and sample from the distribution $h_t$ instead. The detailed derivations are in \Cref{sec:appendix_derivation}.
\end{remark}

\textbf{Summary.} We can see both loss functions~\eqref{eq:loss_mirror_descent} and~\eqref{eq:loss_softmax} handles the aforementioned sampling and computational challenges. First, we avoid sampling from the target policy $\pi_{\rm MD}$ or $\pi_{\rm softmax}$, and sampling from either the current policy $\pi_{\rm old}$ or a distribution $h_t$ we can choose. Second, we have a similar computation with denoising score matching~\eqref{eq:ddpm_loss}, avoiding extra computational cost induced by diffusion policy learning. These benefits perfectly echo the difficulties of sampling and computations in applying vanilla diffusion model training to online RL, enabling efficient diffusion policy learning. 

\begin{remark}[Broader applications] We emphasize that although we develop \algabb with the reweighting techniques for online RL problems, the \algabb has its own merit and can be applied to enable diffusion models on any probabilistic modeling problem with known energy functions, such as Boltzmann samplers~\cite{akhound2024iterated,midgley2022flow}. We also show a toy example of Boltzmann sampling in \Cref{sec:toy} where we use \algabb to train a toy diffusion model to generate samples from a Gaussian mixture distribution with only access to the energy functions.
\end{remark}




\subsection{Practical Issues of Diffusion Policy Training}

\textbf{Batch action sampling.} \citet{ding2024diffusion} revealed that diffusion models are too random for efficient exploitation, and proposed to sample a batch of cations and choose the one with the highest $Q$-value as the behavior policy,
\begin{equation}
    \ab = \arg\max_{i} Q(\sbb, \ab^{(i)})\label{eq:behavior_policy}.
\end{equation}
We leverage this trick in both of our algorithms and add a Gaussian noise whose noise level is automatically tuned to balance exploration and exploitation, similar with~\citet{wang2024diffusion}. 

\textbf{Log likelihood computation.} The soft policy evaluation step in~\algabbn requires explicit log-likelihood that is non-trivial for diffusion policies. However, we observe that the action after batch action sampling is of low stochasticity, thus, we can use the log probability of the additive Gaussian to approximate the log probability of the policy.


\textbf{Numerical stability.} In practice, the exponential of large $Q$ functions in~\eqref{eq:loss_mirror_descent} and \eqref{eq:loss_softmax} might cause the loss to explode. We handle the numerical stabilities by \textbf{(a) Normalization.} In \algabby, we normalize the $Q(\sbb, \ab_0)$ with the exponential moving average~(EMA) of mean and standard deviation over the sampled minibatch. \textbf{(b) The \texttt{logsumexp} trick.} In \algabbn, we sample multiple $\tilde{a}^{(i)}_0\sim\phi_{0|t},i\in\{1,2,\dots, K\}$ for every $\sbb,\ab_t$ and use the \texttt{logsumexp} trick to avoid explosion of the weights, which means replacing the $\exp(Q(\sbb,\tilde \ab_0)/\lambda)$ in~\eqref{eq:loss_softmax} with
$$
\exp\rbr{Q(\sbb, \tilde{\ab}^{(i)}_0)/\lambda - \log\sum_{(i)}\exp \rbr{Q(\sbb,\tilde{\ab}^{(i)}_0)/\lambda}}\ ,
$$
The trick does not conflict with our theoretical derivation, which is another reweighting on the $\sbb$ space.

\textbf{Reverse sampling distribution selection.} 
In \algabbn, we can choose the sampling distribution $h_t$. Empirically we tried uniform distribution, last policy $\pi_{\rm old}(\cdot|\sbb)$, and the perturbed data distributions $\int \pi_{\rm old}(\ab_0|\sbb)q_{t|0}(\cdot|\ab_0)d\ab_0$. All these distributions show similar performance.

Combining all the discussions above, we present the practical algorithm of \algabby in Algorithm \ref{alg:main}, while \algabbn are detailed in \Cref{sec:apdx_algorithm}. 

\begin{algorithm}[h]
    \caption{\algnamey~(\algabby)~\label{alg:main}}
    \begin{algorithmic}[1]
        \REQUIRE Diffusion noise schedule $\beta_t, \bar\alpha_t$ for $t\in \{1,2,\dots T\}$, MDP $\Mcal$, initial policy parameters $\theta_0$, initial Q-function parameters $\zeta_0$, replay buffer $\Dcal=\emptyset$, learning rate $\beta$, KL‐divergence coefficient $\lambda_0$ and target $\lambda_{\rm target}$, $\mu_Q(0)=0.0$, $\sigma_Q(0):=1.0$, EMA parameter $\xi$
        \FOR{epoch $e=1,2,\dots$}
        \STATE Sample $M$ actions with policy $s_{\theta_{e-1}}$ and choose one according to~\eqref{eq:behavior_policy}.
        \STATE Interact with $\Mcal$, store the data in update replay buffer $\Dcal$.
        \STATE Sample a minibatch of $(\sbb, \ab, r, \sbb')$ from $\Dcal$.
        \STATE \algcommentline{{\color{blue}Policy evaluation}.}
        \STATE  Sample $\ab'$ via reverse diffusion process~\eqref{eq:annealed_langevin_2} with $s_{\theta_{e-1}}$.\label{lst:line:sample}
        \STATE Update $Q_e$ by minimizing the Bellman residual~\eqref{eq:q_loss_standard}.
        \STATE \algcommentline{{\color{blue}\algnamey}.}
        \STATE Sample $t$ uniformly from $\{1,2,\dots,T\}$. Sample $\ab_0$ using $s_{\theta_{e-1}}$ and $\ab_t\sim q_{t|0}(\cdot|\ab_0)$.
        \STATE Compute $Q_e(\sbb,\ab_0)$ and normalize $\bar Q_e(\sbb,\ab_0) = \frac{Q_e(\sbb,\ab_0) - \mu_Q(e-1)}{\sigma_Q(e-1)}$
        
        update  $\theta_e$ with score matching $\EE_{\sbb, t}\sbr{\Lcal_{\rm DPMD}(\theta_{e-1};\sbb, t)}$ in~\eqref{eq:loss_mirror_descent} with $Q_e$.
        \STATE Update KL‐divergence coefficient $\lambda_e\leftarrow \lambda_{e-1}+\beta(\lambda_{e-1} - \lambda_{\rm target})$.
        \STATE Update EMA 
        $\mu_Q(e) = (1-\xi)\mu_Q(e-1) + \xi \operatorname{mean}(Q_e)$,
        $\sigma_Q(e) = (1-\xi)\sigma_Q(e-1) + \xi\operatorname{std}(Q_e)$
        \ENDFOR
    \end{algorithmic}
\end{algorithm}

\subsection{Comparison with Recent Diffusion-based Online RLs} We say both proposed algorithms are efficient because of similar computation and memory cost with denoising score matching~\eqref{eq:ddpm_loss} while bypassing the sampling issues, while recent diffusion-based online RL either incur huge computational or memory cost or induce approximation errors.
Recent works on diffusion policy online RLs can be categorized into these families: \textbf{i) Score-based Bolzmann sampling.} With the known energy functions in~\eqref{eq:energy_based_opt_pi}, \citet{psenka2023learning,jain2024sampling} differentiated it to get the non-noisy score function and use Langevin dynamics or diffusion to sample from ~\eqref{eq:energy_based_opt_pi}. The empirical performance is not good due to the inaccurate score function obtained by differentiating a learned energy function. \textbf{ii) Reverse diffusion as policy parametrizations.} The reverse process~\eqref{eq:annealed_langevin_2} can also be directly regarded as a complex parametrization of $\theta$. \citet{wang2024diffusion} backpropagate policy gradients through the reverse diffusion process, resulting in huge computation costs. \citet{ding2024diffusion} approximate the policy learning as a maximum likelihood estimation for the reverse process, which incurs approximation errors and can not handle negative $Q$-values. \textbf{iii) Others.} \citet{yang2023policy} maintained a separate diffusion buffer to approximate the policy distribution and fit it with the diffusion model. \citet{ren2024diffusion} combined the reverse process MDP with MDP in RL and conducted policy optimizations. They all induce huge memory and computation costs, thus being impractical and unnecessary. More general related works can be found in \Cref{sec:apdx_related_works}.


\section{Experimental Results}
\label{sec:exp}
This section presents the experimental results. We first use a toy example, generating a 2D Gaussian mixture, to verify the effectiveness of the proposed reweighted score matching as diffusion model training. Then we show the empirical results of the proposed \algabby and \algabbn algorithms evaluated with OpenAI Gym MuJoCo tasks.

\begin{figure*}[h]
    \centering
    \subfigure[\label{fig:subfig:data}]{\includegraphics[width=0.19\linewidth]{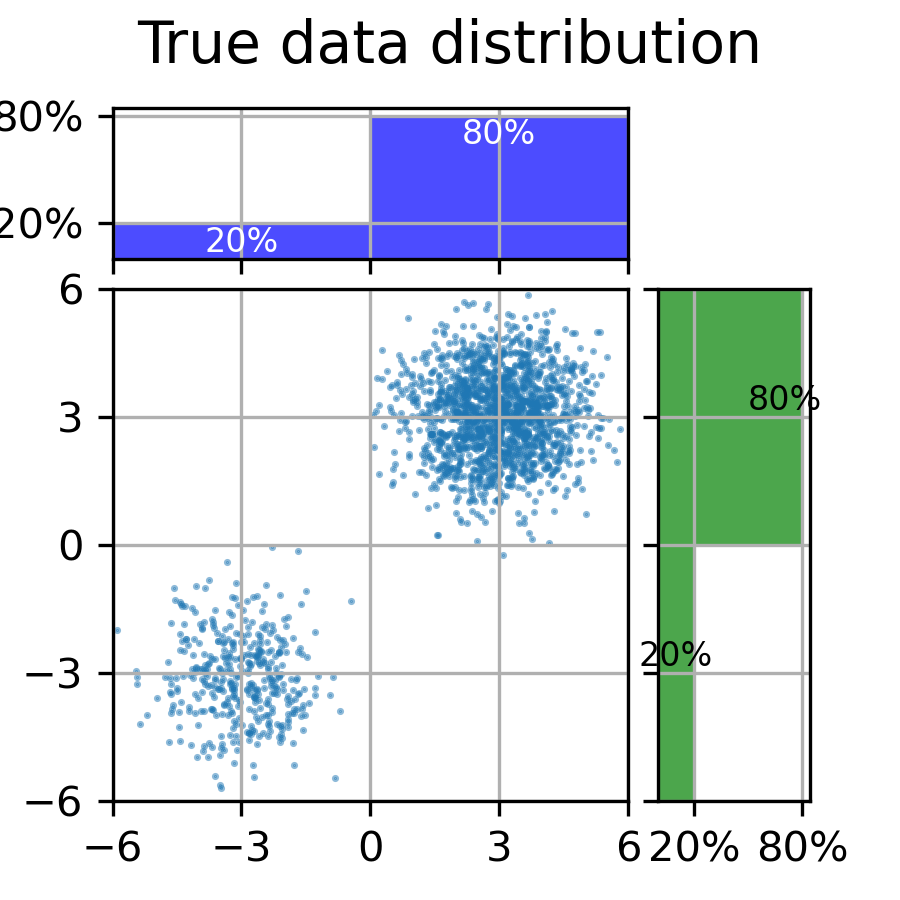}}
    \subfigure[\label{fig:subfig:rssm1}]{\includegraphics[width=0.19\linewidth]{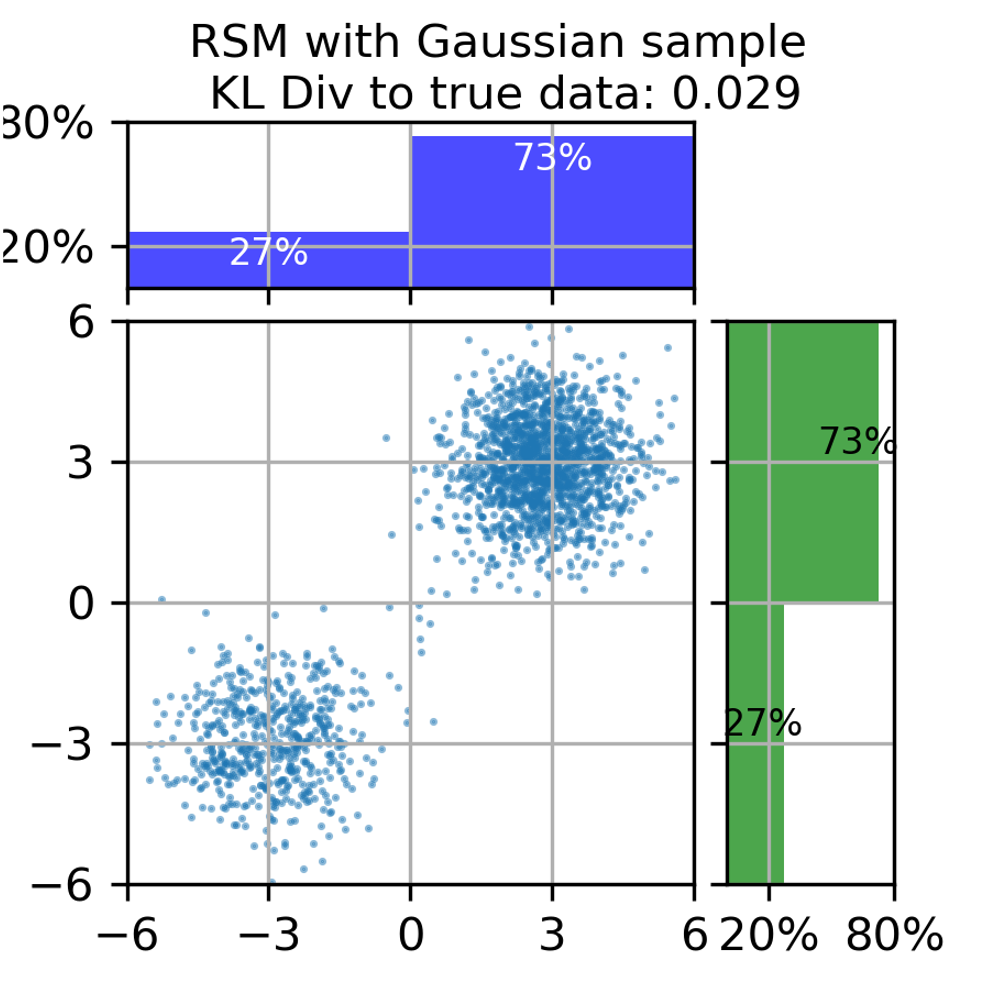}}
    \subfigure[\label{fig:subfig:rssm2}]{\includegraphics[width=0.19\linewidth]{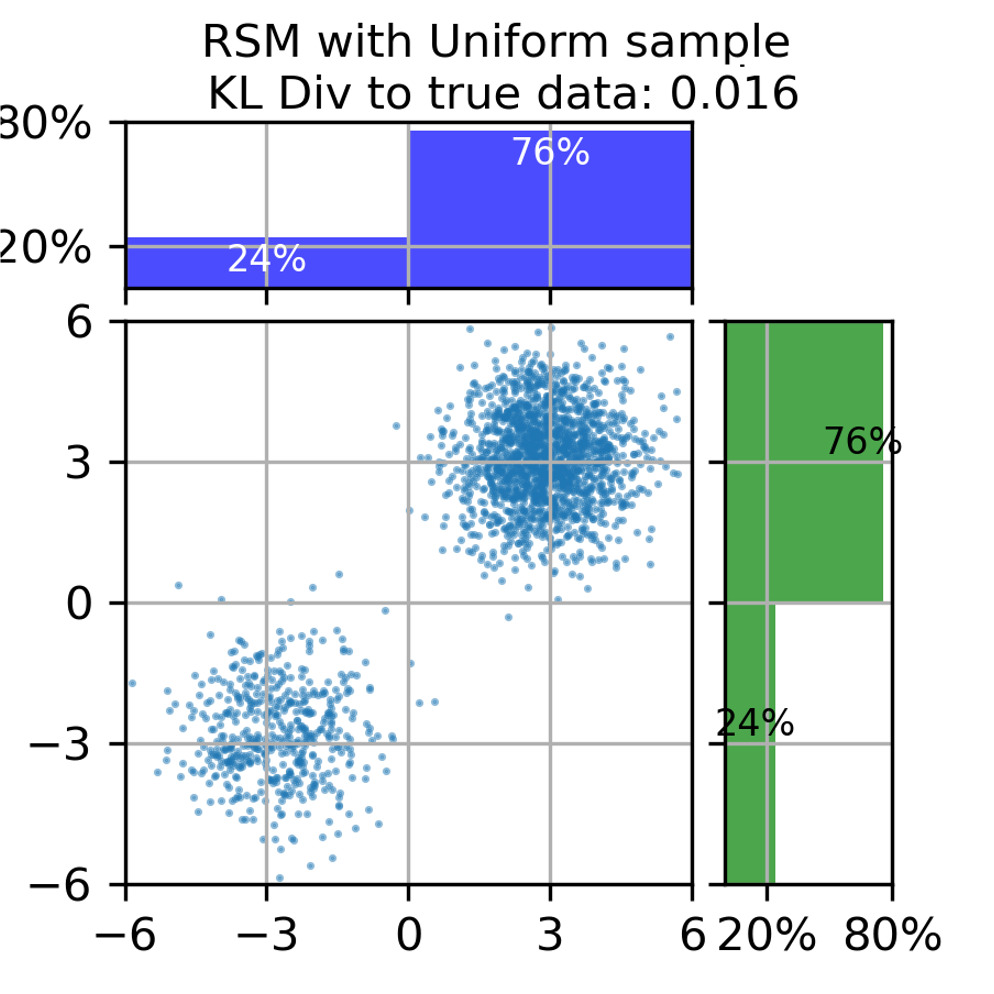}}
    \subfigure[\label{fig:subfig:dsm}]{\includegraphics[width=0.19\linewidth]{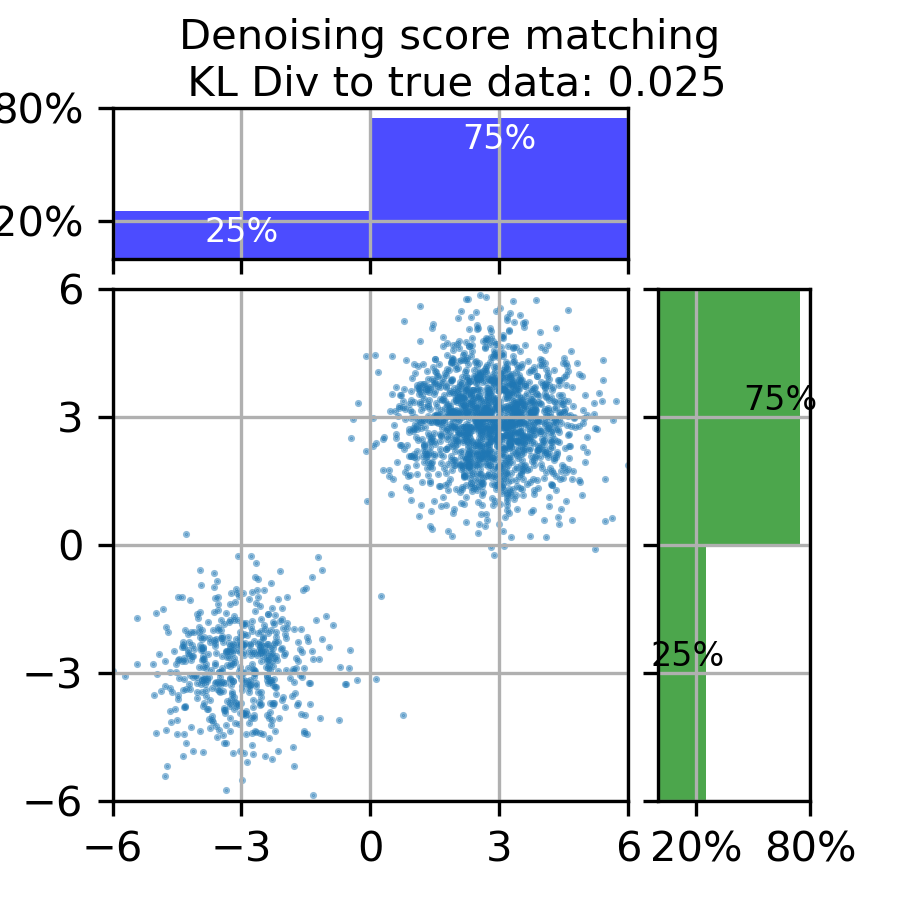}}\subfigure[\label{fig:subfig:langevin}]{\includegraphics[width=0.19\linewidth]{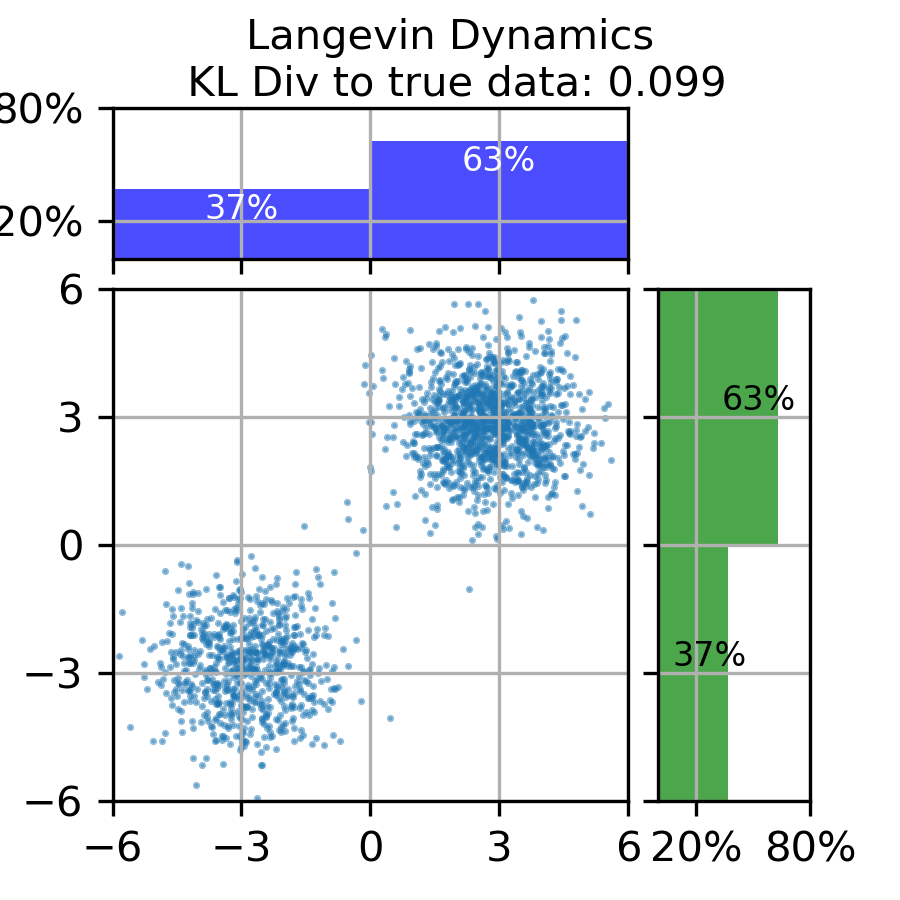}}
    \vspace{-10pt}
    \caption{The scatter plots of generating 2D Gaussian mixture, the histograms show the partition on each axis. \Cref{fig:subfig:data} shows the true data samples with mixing coefficients [0.8, 0.2]. \Cref{fig:subfig:rssm1,fig:subfig:rssm2,fig:subfig:dsm}
     show that the proposed reweighted score matching and denoising score matching can approximately recover the true data distribution. \Cref{fig:subfig:langevin} shows the slow mixing of Langevin dynamics that the mixing coefficients can not be correctly recovered. }
    \vspace{-5pt}
\end{figure*}

\begin{table*}[ht]
    \centering
    \vspace{-5pt}
    \caption{Performance on OpenAI Gym MuJoCo environments. The numbers show the best mean returns and standard deviations over 200k iterations and 5 random seeds.}
    \vspace{5pt}
    \label{tab:performance}
    
    \resizebox{\textwidth}{!}{%
    \begin{tabular}{lllllll}
    \toprule
    & & \textsc{HalfCheetah} &
    \textsc{Reacher} &
    \textsc{Humanoid} & 
    \textsc{Pusher} &
    \textsc{InvertedPendulum} \\
    \midrule
    \multirow{3}{*}{\textbf{\begin{tabular}{@{}l@{}}Classic \\ Model-Free RL\end{tabular}}}
     & PPO 
       & $4852 \pm 732$
       & $-8.69 \pm 11.50$
       & $952 \pm 259$
       & $-25.52 \pm 2.60$
       & $\mathbf{1000 \pm 0}$ \\
     & TD3 
       & $8149 \pm 688$
       & $\mathbf{-3.10 \pm 0.07}$
       & $5816 \pm 358$
       & $\mathbf{-25.07 \pm 1.01}$
       & $\mathbf{1000 \pm 0}$ \\
     & SAC 
       & $8981 \pm 370$ 
       & $-65.35 \pm 56.42 $ 
       & $2858 \pm 2637$
       & $-31.22 \pm 0.26$
       & $\mathbf{1000 \pm 0}$ \\
    \midrule
    \multirow{5}{*}{\textbf{Diffusion Policy RL}}
     & QSM 
     & $10740 \pm 444$
     & $-4.16 \pm 0.28 $
     & $5652 \pm 435$
     & $-80.78 \pm 2.20$
     & $\mathbf{1000 \pm 0}$ \\
     & DIPO 
     & $9063 \pm 654$ 
     & $-3.29 \pm 0.03 $
     & $4880 \pm 1072$
     & $-32.89 \pm 0.34$
     & $\mathbf{1000 \pm 0}$ \\
     & DACER &  $11203 \pm 246$
     & $-3.31 \pm 0.07 $
     & $2755 \pm 3599$
     & $-30.82 \pm 0.13$
     & $801 \pm 446 $ \\
     & QVPO 
     & $7321 \pm 1087$
     & $-30.59 \pm 16.57$
     & $421 \pm 75$
     & $-129.06 \pm 0.96$
     & $\mathbf{1000 \pm 0}$                   \\
     & DPPO 
     & $1173 \pm 392 $
     & $-6.62 \pm 1.70 $
     & $484 \pm 64$
     & $-89.31 \pm 17.32 $
     & $\mathbf{1000 \pm 0}$\\
     & \textbf{DPMD} &  ${11924 \pm 609}$
     & $\mathbf{-3.14 \pm 0.10}$
     & $\mathbf{6959 \pm 460}$
     & $\mathbf{-30.43 \pm 0.37}$
     & $\mathbf{1000 \pm 0}$ \\
     & \textbf{SDAC} &  $\mathbf{12210 \pm 964}$
     & ${-3.37 \pm 0.42}$
     & $6437 \pm 177$
     & ${-32.53 \pm 5.27}$
     & $\mathbf{1000 \pm 0}$ \\
    \toprule
    & & \textsc{Ant} & \textsc{Hopper} 
       & \textsc{Swimmer} 
       & \textsc{Walker2d}& \textsc{Inverted2Pendulum}  \\
    \midrule
    \multirow{3}{*}{\textbf{\begin{tabular}{@{}l@{}}Classic \\ Model-Free RL\end{tabular}}}
     & PPO 
       & $3442 \pm 851$
       & $3227 \pm 164$
       & $84.5 \pm 12.4$
       & $4114 \pm 806 $
       & $9358 \pm 1$ \\
     & TD3 
       & $3733 \pm 1336$
       & $1934 \pm 1079$
       & $71.9 \pm 15.3$
       & $2476 \pm 1357$
       & $\mathbf{9360 \pm 0}$ \\
     & SAC
       & $2500 \pm 767$
       & $3197 \pm 294 $
       & $63.5 \pm 10.2$
       & $3233 \pm 871$
       & $9359 \pm 1$\\
    \midrule
    \multirow{5}{*}{\textbf{Diffusion Policy RL}}
     & QSM   
     & $938 \pm 164 $
     & $2804 \pm 466 $
     & $57.0 \pm 7.7$
     & $2523 \pm 872 $
     & $2186 \pm 234$\\
     & DIPO  
     & $965 \pm 9 $
     & $1191 \pm 770 $
     & $46.7 \pm 2.9$
     & $1961 \pm 1509 $
     & $9352 \pm 3$
     \\
     & DACER 
     & $4301 \pm 524 $
     & $3212 \pm 86$
     & $103.0 \pm 45.8$
     & $3194 \pm 1822 $
     & $6289 \pm 3977$\\
     & QVPO  
     & $718 \pm 336$
     & $2873 \pm 607$
     & $53.4 \pm 5.0$
     & $2337 \pm 1215$
     & $7603 \pm 3910$\\
     & DPPO
     & $ 60 \pm 15$
     & $ 2175\pm 556$
     & $ {106.1\pm 6.5}$
     & $ 1130\pm 686$
     & $ 9346\pm 4$\\
     & \textbf{DPMD} 
     & $\mathbf{5683 \pm 138}$
     & $\mathbf{3275 \pm 55}$
     & ${79.3 \pm 52.5}$
     & $\mathbf{4365 \pm 266}$
     & $\mathbf{9360 \pm 0}$\\
     & \textbf{SDAC} 
     & ${1391 \pm 202 }$
     & ${2955 \pm 370}$
     & $\mathbf{119.1 \pm 41.9}$
     & ${3995 \pm 498}$
     & $\mathbf{9360 \pm 0}$\\
    \bottomrule
    \end{tabular}%
    }
    \end{table*}
    
\subsection{Toy Example}
\label{sec:toy}
We first show a toy example of generating a 2D Gaussian mixture distribution from the known energy function (also known as Boltzmann sampling) to verify the effectiveness of the proposed reweighted score matching. The Gaussian mixture model is composed of two modes whose mean values are $[3, 3]$ and $[-3, -3]$ and mixing coefficients are $0.8$ and $0.2$ shown in \Cref{fig:subfig:data}. The detailed training setup can be found in \Cref{sec:apdx_toy_example}, while another Boltzmann sampling task named Two Moon is shown in \Cref{sec.two_moon}. 

As we only know the energy function, we select the \algabbn-like loss function in~\eqref{eq:loss_softmax} as our training objective. We compare three diffusion models trained with two types of loss functions: 
 \textbf{a.} proposed \algabbn-like loss function in~\eqref{eq:loss_softmax} with sampling distribution $h$ being Gaussian and uniform distributions in \Cref{fig:subfig:rssm1} and \Cref{fig:subfig:rssm2}, which have access to the true energy function but cannot sample directly from the Gaussian mixture.  \textbf{b.}~Denoising score matching loss~\eqref{eq:dsm_loss} in \Cref{fig:subfig:dsm}, which has access to sample from the Gaussian mixture. Empirical results showed that all diffusion models can approximately recover both the two modes and the mixing coefficients, which verifies the effectiveness of the proposed \algabb approach to train diffusion models. 
 
 Moreover, we also show the naive Langevin dynamics~\cite{neal2011mcmc} samples as a reference in \Cref{fig:subfig:langevin}, which has access to the true score function without noise perturbations. It shows that even with the true score function, Langevin dynamics can not correctly recover the mixing coefficient in finite steps~(20 steps in this case), demonstrating the necessity of diffusion models even with given energy functions. 
\begin{figure*}[ht]
    \centering
    \includegraphics[width=0.19\linewidth]{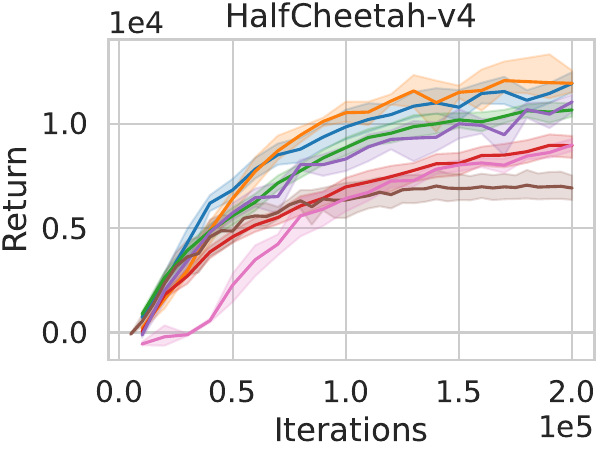}
    \includegraphics[width=0.19\linewidth]{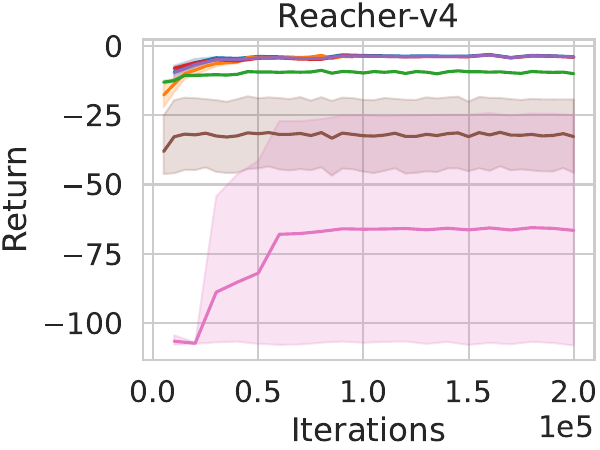}
    \includegraphics[width=0.19\linewidth]{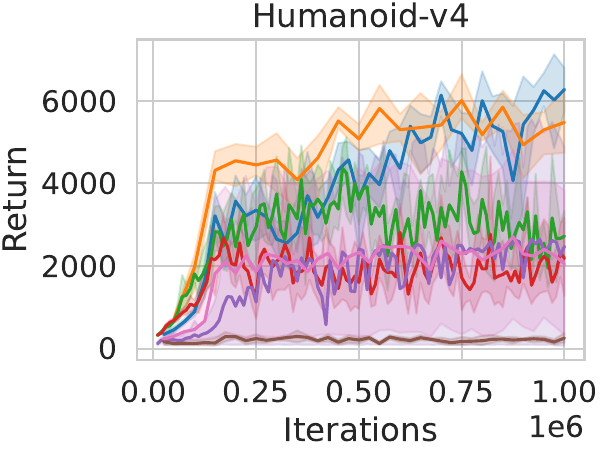}
    \includegraphics[width=0.19\linewidth]{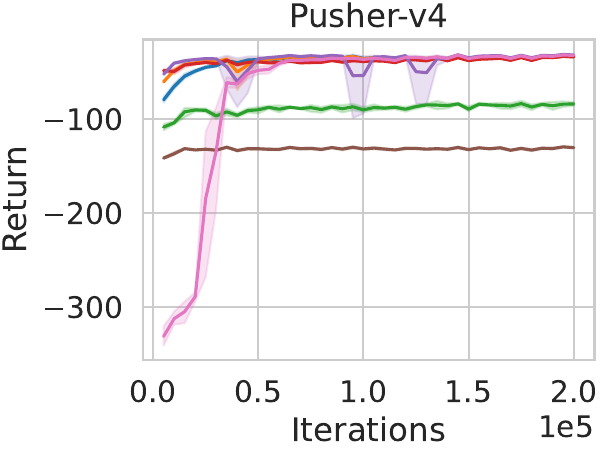}
    \includegraphics[width=0.19\linewidth]{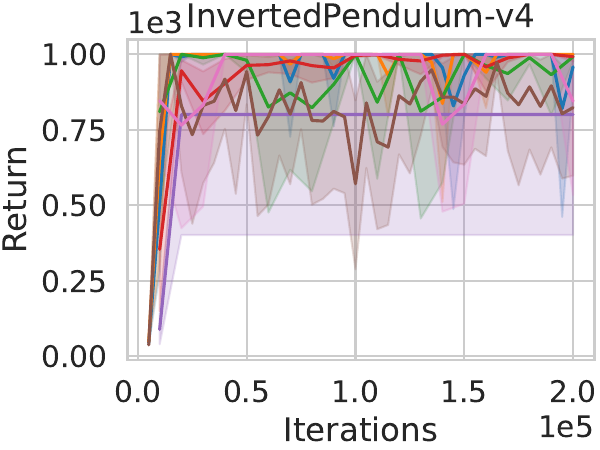}
    \includegraphics[width=0.19\linewidth]{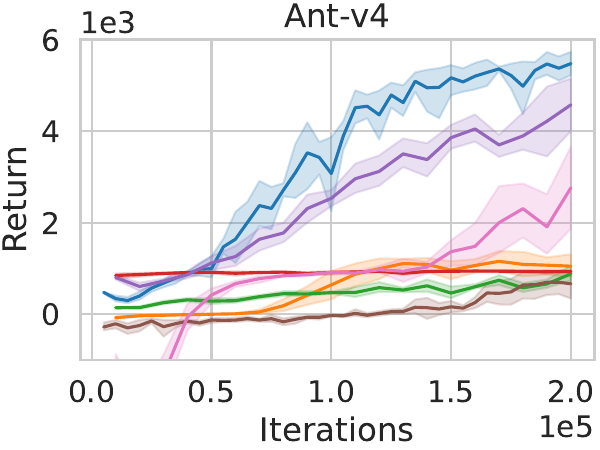}
    \includegraphics[width=0.19\linewidth]{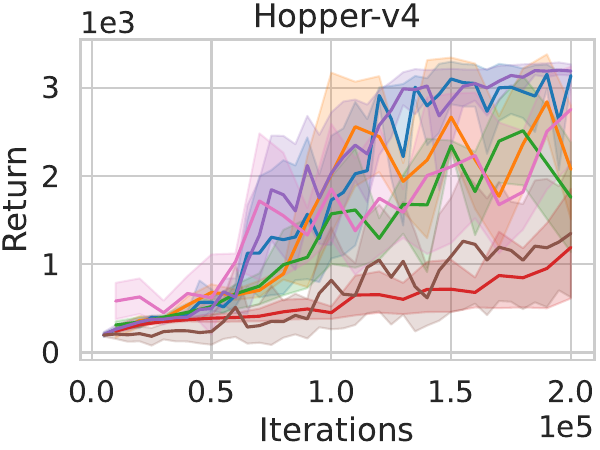}
    \includegraphics[width=0.19\linewidth]{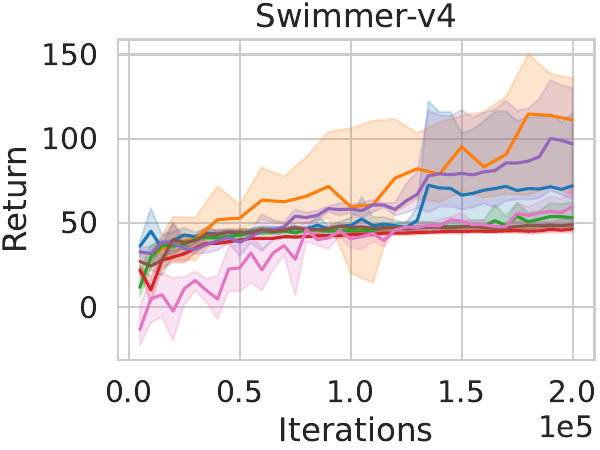}
    \includegraphics[width=0.19\linewidth]{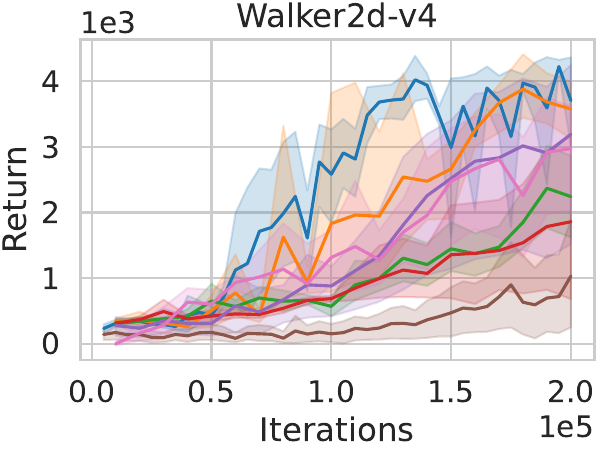}
    \includegraphics[width=0.19\linewidth]{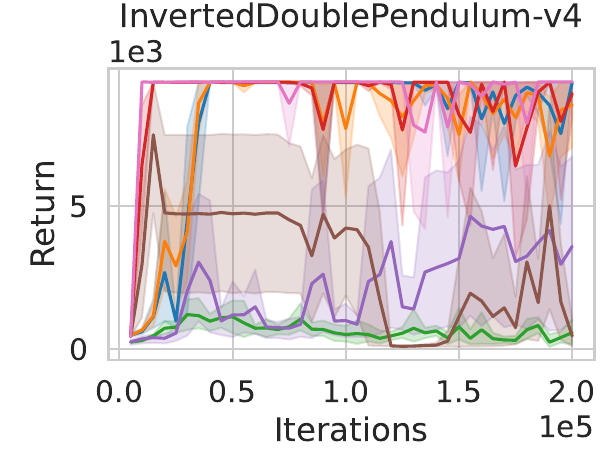}
    \includegraphics[width=0.8\linewidth]{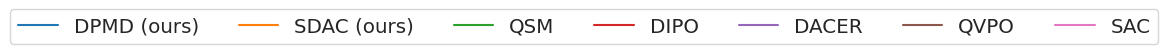}
    \vspace{-10pt}
    \caption{Average return over 20 evaluation episodes every 25k iterations (125k for Humanoid) during training. We select the top 5 baselines ranked by average performance over all tasks for clarity. The error bars are standard deviations over 5 random seeds.}
    \label{fig:main}
\end{figure*}
\subsection{OpenAI Gym MuJoCo Tasks}
\subsubsection{Experimental Setup}
We implemented the proposed \algabby and \algabbn algorithms with the JAX package\footnote{The implementation can be found at \href{https://github.com/mahaitongdae/diffusion_policy_online_rl}{https://github.com/mahaitongdae/diffusion\_policy\_online\_rl}.} and evaluated the performance on 10 OpenAI Gym MuJoCo v4 tasks. All environments except Humanoid-v4 are trained over 200K iterations with a total of 1 million environment interactions, while Humanoid-v4 has five times more. 

\textbf{Baselines.} The baselines include two families of model-free RL algorithms. The first family is diffusion policy RL, which includes a collection of recent diffusion-policy online RLs, including  QSM~\citep{psenka2023learning}, QVPO~\citep{ding2024diffusion}, DACER~\citep{wang2024diffusion}, DIPO~\citep{yang2023policy} and DPPO~\citep{ren2024diffusion}. The second family is classic model-free online RL baselines including PPO~\citep{schulman2017proximal}, TD3~\citep{fujimoto2018addressing} and SAC~\citep{haarnoja2018soft}. A more detailed explanation to the baselines can be found in Appendix \ref{sec:apdx_baselines}.

\subsubsection{Experimental Results}
The performance and training curves are shown in \Cref{tab:performance} and \Cref{fig:main}, which shows that our proposed algorithm outperforms all the baselines in all OpenAI Gym MuJoCo environments. Especially, for those complex locomotion tasks including the \texttt{HalfCheetah}, \texttt{Walker2d}, \texttt{Ant}, and \texttt{Humanoid}, our top-performing algorithm variant obtained \textbf{36.0\%, 41.7\%, 127.3\%, 143.5\%} performance improvement compared to SAC. Specifically, \algabby achieved \textbf{at least 6.4\%, 43.4\%, 32.1\%, 23.1\%} performance improvement compared to other diffusion-policy online RL baselines (not the same for all environments), respectively, while \algabbn shows comparable performance with \algabby except \texttt{Ant}. The empirical results demonstrate the superior and consistent performance of our proposed algorithm and the true potential of diffusion policies in online RL. 

Moreover, the performance of \algabby is very stable and consistently good for all the tasks. Other algorithms, including \algabbn, performed badly on one or some tasks. For example, QSM failed the InvertedDoublePendulum, possibly because its true value function is known to be highly non-smooth. The non-smooth nature results in bad score function estimations since QSM matches the score function by differentiating the $Q$-functions. QVPO failed Reacher and Pusher since it cannot handle negative $Q$-functions. DACER failed InvertedPendulum despite its good performance in some complex tasks, probably due to the gradient instability when backpropagated recursively.

\textbf{Computation and memory cost.}
We count the GPU memory allocations and total computation time listed in \Cref{tab:time}. The computation is conducted on a desktop workstation with AMD Ryzen 9 7950X CPU, 96 GB memory, and NVIDIA RTX 4090 GPU. We achieve low memory consumption and faster computations compared to other diffusion-policy baselines. Note that the QSM essentially does not involve the denoising process, thus has the lowest computation requirements. We can still achieve a comparable computation time and memory cost with QSM, indicating the proposed \algabb does not add much extra computational cost due to the diffusion policies. \algabbn does not need to sample from the current policy to perform reweighted score matching, so it runs faster than \algabby.

\begin{table}[ht] 
    \centering
    \caption{GPU memory allocation and total compute time of 200K iterations and 1 million environment interactions. *QSM did not learn diffusion policies essentially thus the computation is lightweight.\label{tab:time}}
    {
    \small
        \begin{tabular}{r|c | c}
    \toprule
         Algorithm &
         GPU Memory (MB)
         & 
         Training time (min)
          \\
         \midrule
         QSM*& 997 & 14.23\\
         \midrule
         QVPO & 5219 & 30.90\\
         DACER &1371 & 27.61\\
         DIPO & 5096 & 19.31\\
         DPPO & 1557 & 95.16\\
         {DPMD} & {1192} & {21.10}\\
         \textbf{SDAC} & \textbf{1113} & \textbf{16.10}\\
         \bottomrule
    \end{tabular}
    }
\end{table}

    \textbf{Sensitivity analysis.}
    In Figure~\ref{fig:ablation-study}, we perform sensitivity analyses of different diffusion steps and diffusion noise schedules on the \algabby variant. Results show that 10 and 20 diffusion steps obtain comparable results, both outperforming the 30-step setting. The linear and cosine noise schedules perform similarly, and both outperform the variance-preserving schedule. Therefore, we choose 20 steps and cosine schedules for all tasks. The results also show the robustness to the diffusion process hyperparameters.
    \begin{figure}[ht]
        \centering
        \includegraphics[width=0.3\linewidth]{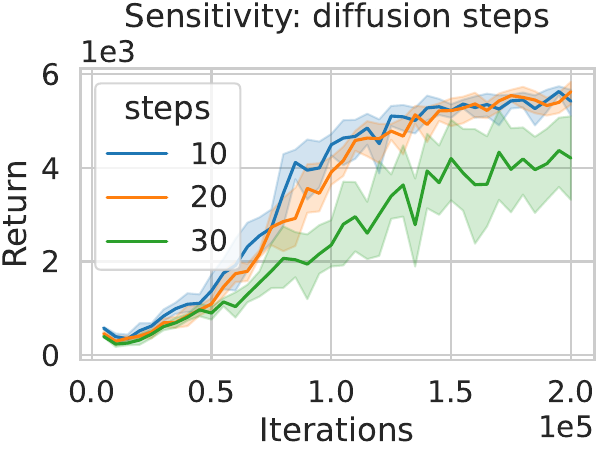}
        \includegraphics[width=0.3\linewidth]{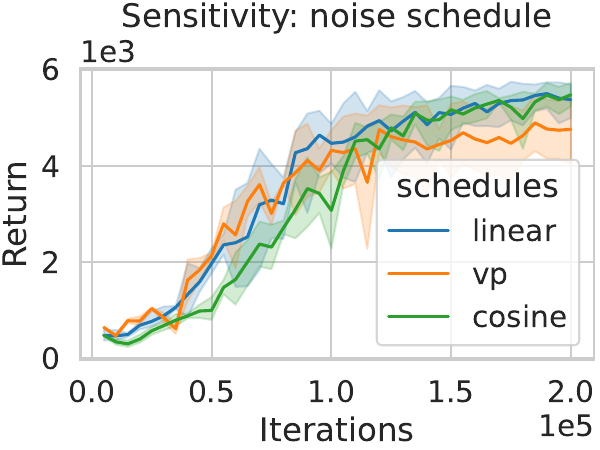}
        \vspace{-5pt}
        \caption{Sensitivity analysis on diffusion steps and diffusion noise schedule on Ant-v4.}
        \label{fig:ablation-study}
    \end{figure}

\section{Conclusion}
In this paper, we proposed \algname~(\algabb), an efficient diffusion policy training algorithm tailored for online RL. Regarding diffusion models as noise-perturbed EBMs, we develop the reweighted score matching to train diffusion models with access only to the energy functions and bypass sampling from the data distribution. In this way, we can train a diffusion policy with only access to the $Q$-function as the energy functions in online RL. Empirical results have shown superior performance compared to SAC and other recent diffusion policy online RLs. Possible future directions include improving the stability of diffusion policies and efficient exploration scheme design.

\section*{Impact Statement}
This paper presents work whose goal is to advance the field of Machine Learning. There are many potential societal consequences of our work, none of which we feel must be specifically highlighted here.

\section*{Acknowledgment}
This paper is supported by NSF AI institute: 2112085, 
NSF ECCS-2401390, NSF ECCS-2401391, ONR N000142512173, NSF ASCENT 2328241, NSF IIS-2403240, Schmidt Sciences AI2050 Fellowship.

\bibliography{ref}
\bibliographystyle{icml2025}

\newpage
\appendix
\onecolumn

\section{Related works}
\label{sec:apdx_related_works}

\textbf{Diffusion models for decision making.} Due to their rich expressiveness in modeling complex and multimodal distributions, diffusion models have been leveraged to represent stochastic policies~\cite{wang2022diffusion,chen2022offline,hansen2023idql}, plan trajectories~\cite{janner2022planning,chi2023diffusion,du2024learning} and capture transition dynamics~\cite{rigter2023world,ding2024diffusionworldmodel,shribak2024diffusion}. Specifically, we focus on the diffusion policies. Diffusion policies have been primarily used on offline RL with expert datasets, where the denoising score matching~\eqref{eq:ddpm_loss} is still available and the learned $Q$-function only provides extra guidance such as regularization~\cite{wang2022diffusion} or multiplication in the energy function.
However, in online RL we do not have the dataset, thus denoising score matching is impossible.

\textbf{Diffusion Models.}
Diffusion models have a dual interpretation of EBMs and latent variable models. The latent variable interpretation is motivated by the solving reverse-time diffusion thermodynamics via multiple layers of decoder networks \cite{sohl-dickstein2015deep}. It was later refined by \citet{ho2020denoising} via simplified training loss. The EBM interpretation aims to solve pitfalls in Langevin dynamics sampling by adding progressively decreasing noise~\cite{song2019generative}. Then the two viewpoints are merged together with viewpoints from stochastic differential equations~\cite{song2021scorebased}, followed by numerous improvements on the training and sampling design~\cite{song2022denoising,karras2022elucidating}.


\textbf{Noise-conditioned score networks.}
A equivalent approaches developed by \citet{song2019generative} simultaneously with diffusion models is to fit the score function of a series of noise-perturbed data distribution $\Ncal\rbr{\xb_i;\xb, \sigma_i^2 \Ib}, i=\{1,2,\dots, K\}$ with a noise schedule $\sigma_1> \sigma_2>\dots>\sigma_K$. 
The resulting models, named the noise-conditioned score networks (NCSN) $f_\theta\rbr{\xb_i;\sigma_i}$, take the noise level into the inputs and are learned by denoising score matching~\cite{vincent2011connection}
\begin{align}
    &\EE_{\xb\sim p,\xb_i\sim\Ncal\rbr{\xb,\sigma_i^2 \Ib}}\sbr{\|f_\theta\rbr{\xb_i;\sigma_i} -\nabla_{\xb_i}\log q(\xb_i|\xb) \|^2}\label{eq:dsm_loss_apdx}
\end{align}
Then in the sampling stage, \citet{song2019generative} uses the Langevin dynamics $\xb_{i+1} = \xb_{i} + \eta \nabla_{\xb_{i}}\log q(\xb_{i} \mid \xb) + \sqrt{2 \eta}\zb_i$ to sample from energy function. \citet{song2019generative} additionally replace the original score function $\nabla_{\xb_{i}}\log q(\xb_{i} \mid \xb)$ in the Langevin dynamics with the learned noisy score function $f_\theta(\xbtil;\sigma_i)$:
\begin{equation}
    \xb_{i+1} \leftarrow \xb_i+\eta f_\theta(\xbtil;\sigma_i)+\sqrt{2 \eta} \mathbf{z}_i, \quad i=0,\cdots, K\label{eq:annealed_langevin}
\end{equation}
named as annealed Langevin dynamics. The scheduled noise perturbation design significantly improved the image generation performance to match the state-of-the-art (SOTA) at that time~\cite{song2019generative}, which is further refined by DDPM.

We can see that the annealed Langevin dynamics~\eqref{eq:annealed_langevin} resembles the DDPM sampling~\eqref{eq:annealed_langevin_2} with different scale factors, and the denoising score matching loss~\eqref{eq:dsm_loss} is equivalent to \eqref{eq:ddpm_loss}. Therefore, DDPM can be interpreted as EBMs with multi-level noise perturbations. A more thorough discussion on their equivalency can also be found in~\citep{ho2020denoising,song2021scorebased}.

\section{Derivations}

\subsection{Derivations of \Cref{prop:diff_ebm}}
\label{sec:prop_1_apdx}
We repeat \Cref{prop:diff_ebm} here,
\diffebm*
\begin{proof}

Given time index $t$, according to the Tweedie's identity~\cite{efron2011tweedie}, we have
    \begin{equation}
        \nabla_{\ab_t}\log p_t(\ab_t|\sbb) = \EE_{\ab_0\sim q_{0|t}(\cdot|\ab_t,\sbb)}\sbr{\nabla_{\ab_t}\log q_{t|0}(\ab_t|\ab_0)}\label{eq:tweedie}
    \end{equation}
    where $q_{0|t}(\ab_0|\ab_t,\sbb) = \frac{q_{t|0}(\ab_t|\ab_0)p_0(\ab_0|\sbb)}{p_t(\ab_t|\sbb)}$. \eqref{eq:tweedie} can be verified by
    $$
    \begin{aligned}
        &~\nabla_{\ab_t}\log p_t(\ab_t|\sbb) =  \frac{\nabla_{\ab_t}p_t(\ab_t|\sbb)}{p_t(\ab_t|\sbb)}
        = \frac{\nabla_{\ab_t} \int q_{t|0}(\ab_t|\ab_0) p_0(\ab_0|\sbb)d\ab_0}{p_t(\ab_t|\sbb)}\\
        = & ~\frac{\int \nabla_{\ab_t} \log q_{t|0}(\ab_t|\ab_0) q_{t|0}(\ab_t|\ab_0) p_0(\ab_0|\sbb)d\ab_0}{p_t(\ab_t|\sbb)}
        = \EE_{\ab_0\sim q_{0|t}(\cdot|\ab_t,\sbb)}\sbr{\nabla_{\ab_t}\log q_{t|0}(\ab_t|\ab_0)}
    \end{aligned}
    $$
    
    Then we use the score network to match both sides of~\eqref{eq:tweedie} and take the expectation of error norm over $p_t(\ab_t|\sbb)$. 
    The LHS is straightforward,
    \begin{equation}
        \underset{\ab_t\sim p_t(\cdot|\sbb)}{\EE}\sbr{\nbr{s_\theta(\ab_t;\sbb,t)\!-\!\nabla_{\ab_t}\log p_t(\ab_t|\sbb)}^2}~\tag{\ref{eq:vsm_loss}}
    \end{equation}
    while for the RHS we have

\begin{align}
    &\EE_{\ab_t\sim p_t}\sbr{\nbr{s_\theta(\ab_t,\sbb, t) -\EE_{\ab_0\sim q_{0|t}(\cdot|\ab_t,\sbb)}\sbr{\nabla_{\ab_t}\log q_{t|0}(\ab_t|\ab_0)}}^2}\notag\\ 
    = &~
    \EE_{\ab_t\sim p_t}\sbr{\nbr{s_\theta(\ab_t,\sbb, t)}^2 }\notag- \EE_{\ab_0\sim p_{0},\ab_t\sim q_{t|0}}\sbr{\inner{s_\theta(\ab_t,\sbb, t)}{\nabla_{\ab_t}\log q_{t|0}(\ab_t|\ab_0)}} + \texttt{constant}\\
    = &~ \underset{\substack{\ab_0\sim p_0(\cdot|\sbb)\\\ab_t\sim q_{t|0}(\cdot|\ab_0)}}{\EE}\sbr{\nbr{ s_\theta\rbr{\ab_t;\sbb, t} - \nabla_{\ab_t} \log q_{t|0}(\ab_t|\ab_0)}^2} + \texttt{constant}
\end{align}
    
    where \texttt{constant} is constant irrelevant with $\theta$. Therefore, we get
    \begin{align}
         & \underset{\ab_t\sim p_t(\cdot|\sbb)}{\EE}\sbr{\nbr{s_\theta(\ab_t;\sbb,t)\!-\!\nabla_{\ab_t}\log p_t(\ab_t|\sbb)}^2}~\tag{\ref{eq:vsm_loss}}\\
        = & \underset{\substack{\ab_0\sim p_0(\cdot|\sbb)\\\ab_t\sim q_{t|0}(\cdot|\ab_0)}}{\EE}\sbr{\nbr{ s_\theta\rbr{\ab_t;\sbb, t} - \nabla_{\ab_t} \log q_{t|0}(\ab_t|\ab_0)}^2}~\tag{\ref{eq:dsm_loss}}\\
        & + \texttt{constant}\notag
    \end{align}

        Note that the optimal solution of LHS~\eqref{eq:vsm_loss} is $s_{\theta^*}(\ab_t;\sbb, t)=\nabla_{\ab_t}\log p_t(\ab_t|\sbb)$ anywhere on the $\ab_t$ space. Moreover, the RHS~\eqref{eq:dsm_loss} and LHS~\eqref{eq:vsm_loss} only differ by a constant, so they share the same optimal $\theta^*$. As RHS \eqref{eq:dsm_loss} is exactly the $t^{\rm th}$ term in the DDPM loss function~\eqref{eq:ddpm_loss}, we can verify that $s_{\theta^*}(\ab_t;\sbb, t)=\nabla_{\ab_t}\log p_t(\ab_t|\sbb)$ anywhere on the $\ab_t$ space is the optimal solution optimizing the DDPM loss~\eqref{eq:ddpm_loss}.
\end{proof}
\subsection{Derivations of \Cref{sec.algs}}
\label{sec:appendix_derivation}

\Cref{sec.algs} shows that we can match the score network $s_\theta(\ab_t;\sbb, t)$ with noise-perturbed policy score function $\nabla_{\ab_t}\log p_t(\ab_t|\sbb)$ without sampling from $p_0$ like denoising score matching~\eqref{eq:ddpm_loss}. 

\subsubsection{Derivations of \algnamey}
First, we restate the results. Consider the mirror descent policy $\pi_{\rm MD}(\cdot|\sbb)$ in~\eqref{eq:pwd closed form} and set $p_0(\cdot|\sbb)=\pi_{\rm MD}(\cdot|\sbb)$, we define the reweighting function as 
$$
g_{\rm MD} = Z_{\rm MD}(\sbb) p_t(\ab_t|\sbb)
$$
where $Z(\sbb)=\int \pi_{\rm old}(\ab|\sbb)\exp \rbr{Q\rbr{\sbb,\ab}/\lambda} d\ab$. 
Then we can show the reweighted loss $\Lcal^{g_{\rm MD}}(\theta;\sbb, t)$ is tractable the following derivation,
\begin{equation}
    \begin{aligned}
        \Lcal^{g_{\rm MD}}(\theta;\sbb, t) = &\int g_{\rm MD}(\ab_t;\sbb)\nbr{s_\theta(\ab_t;\sbb, t) -\nabla_{\ab_t}\log p_t(\ab_t|\sbb)}^2 da_t \\
       = &
\underbrace{\underset{\substack{\ab_0\sim \pi_{\rm old}\\ \ab_t\sim q_{t|0}}}{\EE}\sbr{\exp \rbr{Q\rbr{\sbb,\ab_0}/\lambda}\nbr{s_\theta\rbr{\ab_t; \sbb, t} - \nabla_{\ab_t}\log q_{t|0}\rbr{\ab_t \! \mid \! \ab_0}}^2}}_{\Lcal_{\rm DPMD}(\theta,\sbb,t)}+\texttt{constant}
    \end{aligned}\tag{\ref{eq:loss_mirror_descent}}
\end{equation}
where $\Lcal_{\rm DPMD}(\theta,\sbb,t)$ is tractable through sampling-based approximation.

\begin{proof}
    \eqref{eq:loss_mirror_descent}
    $$
    \begin{aligned}
        & \Lcal^{g_{\rm MD}}(\theta;\sbb, t) \\
        = &\int g_{\rm MD}(\ab_t;\sbb)\nbr{s_\theta(\ab_t;\sbb, t) -\nabla_{\ab_t}\log p_t(\ab_t|\sbb)}^2 da_t \\
        = & Z_{\rm MD}(\sbb) \int p_t(\ab_t|\sbb)\nbr{s_\theta(\ab_t;\sbb, t) -\nabla_{\ab_t}\log p_t(\ab_t|\sbb)}^2 da_t\\
        = & \int Z_{\rm MD}(\sbb)\int q_{t|0}(\ab_t|\ab_0)\underbrace{\frac{\pi_{\rm old}(\ab_0|\sbb)\exp(Q(\sbb, \ab_0)/\lambda) }{Z_{\rm MD}(\sbb)}}_{p_0(\ab_0|\sbb)}\nbr{s_\theta(\ab_t,\sbb, t) - \nabla_{\ab_t}\log q_{t|0}(\ab_t|\ab_0)}^2  d\ab_0 da_t + \texttt{constant} \\
        = & \iint q_{t|0}(\ab_t|\ab_0)\pi_{\rm old}(\ab_0|\sbb)\exp(Q(\sbb, \ab_0)/\lambda) \nbr{s_\theta(\ab_t,\sbb, t) - \nabla_{\ab_t}\log q_{t|0}(\ab_t|\ab_0)}^2  d\ab_0 da_t + \texttt{constant} \\
        = & \underset{\substack{\ab_0\sim \pi_{\rm old}\\ \ab_t\sim q_{t|0}}}{\EE}\sbr{\exp \rbr{Q\rbr{\sbb,\ab_0}/\lambda}\nbr{s_\theta\rbr{\ab_t; \sbb, t} - \nabla_{\ab_t}\log q_{t|0}\rbr{\ab_t \! \mid \! \ab_0}}^2} + \texttt{constant}
    \end{aligned}
    $$
    where the third equality leverages results in \Cref{prop:diff_ebm}.
\end{proof}
\subsubsection{Derivations of \algnamen}
We first restate the results. First, we define the reweighting function as 
$$
g_{\rm softmax} = h_t(\ab_t|\sbb)Z(\sbb) p_t(\ab_t|\sbb)
$$
where $h_t(\ab_t|\sbb)$ is a sampling distribution \emph{we choose}. We require $h_t(\ab_t|\sbb)$ to have full support on $\ab_t$ space.
Then we can show the following equivalence,
\begin{equation}
\begin{aligned}
    \Lcal^{g_{\rm MaxEnt}}(\theta;\sbb, t) = &\int g_{\rm MD}(\ab_t;\sbb)\nbr{s_\theta(\ab_t;\sbb, t) -\nabla_{\ab_t}\log p_t(\ab_t|\sbb)}^2 da_t\\
    = & \texttt{constant}\times\underbrace{\underset{\substack{\ab_t\sim h_t\\ \tilde\ab_0\sim \phi_{0|t}}}{\EE}\sbr{\exp \rbr{Q\rbr{\sbb, \tilde\ab_0}/\lambda}\nbr{s_\theta\rbr{\ab_t; \sbb, t} - \nabla_{\ab_t}\log \phi_{0|t}\rbr{\tilde\ab_0 \! \mid \! \ab_t}}^2} }_{\Lcal_{\rm SDAC}(\theta,\sbb, t)}+ \texttt{constant}
\end{aligned}
     \tag{\ref{eq:loss_softmax}}
\end{equation}
where $\phi_{0|t}$ is a conditional Gaussian distribution defined as 
\begin{equation}
    \phi_{0|t} (\tilde\ab_0|\ab_t):=\Ncal\rbr{\tilde\ab_0;\frac{1}{\sqrt{\bar\alpha_t}}\ab_t, \frac{1 - \bar\alpha_t}{\bar\alpha_t}\Ib}\tag{\ref{eq:a0_sample_thm}}
\end{equation}

\textbf{Proof.} 

\begin{equation}
    \begin{aligned}
         \Lcal^{g_{\rm MaxEnt}}(\theta;\sbb, t) = &\int  h_t(\ab_t|\sbb)Z(\sbb) p_t(\ab_t|\sbb)\nbr{s_\theta(\ab_t;\sbb, t) -\nabla_{\ab_t}\log p_t(\ab_t|\sbb)}^2 da_t\\
         = &  \iint h_t(\ab_t|\sbb) Z(\sbb) p_0(\ab_0|\sbb)q_{t|0}(\ab_t|\ab_0)\nbr{s_\theta\rbr{\ab_t;\sbb, t}  -  \nabla_{\ab_t}\log q_{t|0}\rbr{\ab_t  \mid  \ab_0}}^2d\ab_0d\ab_t + \texttt{constant}
    \end{aligned}\label{eq:general_g_loss_reformat1}
\end{equation}

Then we leverage the reverse sampling trick,

There exists a reverse sampling distribution $\phi_{0|t}$ satisfying
    \begin{equation}
        \begin{aligned}
        &\phi_{0|t}(\ab_0\mid \ab_t) =\Ncal\rbr{\ab_0;\frac{1}{\sqrt{\bar\alpha_t}}\ab_t, \frac{1 - \bar\alpha_t}{\bar\alpha_t}\Ib}       \propto~q_{t|0}(\ab_t\mid \ab_0) = \Ncal\rbr{\ab_t;\sqrt{\bar\alpha_t}\ab_0, \rbr{1 - \bar\alpha_t}\Ib},
    \end{aligned}\label{eq:reverse_gaussian}
    \end{equation}
    and their score functions match
    \begin{equation}
    \nabla_{\ab_t}\log q_{t|0}(\ab_t\mid \ab_0) = \nabla_{\ab_t}\log \phi_{0|t}(\ab_0\mid \ab_t) = - \frac{\ab_t -\sqrt{\bar\alpha_t}\ab_0}{1 -\bar\alpha_t}\label{eq:score_match}
    \end{equation}
     
    This is achieved by examining the density function,

\begin{equation}
\phi_{0 \mid t}\left(\tilde{\ab}_0 \mid \ab_t\right)=\left(2 \pi \frac{1-\bar{\alpha}_t}{\bar{\alpha}_t}\right)^{-d / 2} \exp \left(-\frac{\left\|\tilde{\ab}_0-\frac{1}{\sqrt{\bar{\alpha}_t}} \ab_t\right\|^2}{2 \frac{\left(1-\bar{\alpha}_t\right)}{\bar{\alpha}_t}}\right)
\end{equation}

while 
\begin{equation}
{q}_{t \mid 0}\left({\ab}_t \mid \ab_0\right)=\left(2 \pi (1-\bar{\alpha}_t)\right)^{-d / 2} \exp \left(-\frac{\left\|\sqrt{\bar\alpha_t}{\ab}_0- \ab_t\right\|^2}{2 \left(1-\bar{\alpha}_t\right)}\right) = (\bar\alpha_t)^{-d/2}\phi_{0 \mid t}\left({\ab}_0 \mid \ab_t\right)
\end{equation}

Leveraging the reverse sampling trick, we can change the weighting function in \eqref{eq:general_g_loss_reformat1} to 
    \begin{equation}
        \begin{aligned}
            & h_t(\ab_t|\sbb) Z(\sbb) p_0(\ab_0|\sbb)q_{t|0}(\ab_t|\ab_0) \\
            = & h_t(\ab_t|\sbb) Z(\sbb) \frac{\exp(Q(\sbb,\ab_0)/\lambda)}{Z(\sbb)}q_{t|0}(\ab_t|\ab_0)\\
            = & (\bar\alpha_t)^{d /2} \exp(Q(\sbb,\ab_0)/\lambda)  h_t(\ab_t|\sbb) \phi_{0|t}(\ab_0|\ab_t)
        \end{aligned}
    \end{equation}
    
    Considering the score equivalence,  \eqref{eq:general_g_loss_reformat1} can be further derived to 
    {
    \begin{equation}
        \begin{aligned}
            & \Lcal^{g_{\rm softmax}}(\theta)\\
            = & (\bar\alpha_t)^{d /2} \iint h_t(\ab_t|\sbb)\phi_{0|t}(\tilde\ab_0| \ab_t)\exp\rbr{Q(\sbb,\tilde\ab_0)/\lambda} \nbr{s_\theta\rbr{\ab_t;\sbb, t}  -  \nabla_{\ab_t}\log \phi_{0|t}\rbr{\tilde\ab_0  \mid  \ab_t}}^2d\tilde\ab_0d\ab_t  + \texttt{constant}\label{eq:with_tidle_q}\\
            = & (\bar\alpha_t)^{d /2}\underbrace{\underset{\substack{\ab_t\sim h_t\\ \tilde\ab_0\sim \phi_{0|t}}}{\EE}\sbr{\exp \rbr{Q\rbr{\sbb, \tilde\ab_0}/\lambda}\nbr{s_\theta\rbr{\ab_t; \sbb, t} - \nabla_{\ab_t}\log  \phi_{0|t}\rbr{\tilde\ab_0 \! \mid \! \ab_t}}^2} }_{\Lcal_{\rm SDAC}(\theta,\sbb, t)}+ \texttt{constant}
        \end{aligned}
    \end{equation}
    }
.

\section{Experiments}
\subsection{Policy Evaluation for Entropy-Regularized MDPs}\label{sec:apdx_entropy_regularized_mdp}

\subsubsection{Policy Evaluation for \algabby}

The policy evaluation of \algabby minimizes the Bellman residual to learn the $Q$-function parameters,
\begin{equation}\label{eq:q_loss_standard}
    \Lcal^Q\rbr{\zeta;\pi} = \EE_{\sbb, \ab \sim \mathcal{D}} \sbr{\rbr{Q_\zeta(\sbb, \ab)-\rbr{r(\sbb, \ab)+\gamma\EE_{\sbb'}\EE_{\ab'\sim\pi\rbr{\ab'|\sbb'}}\sbr{Q_{\bar\zeta}\rbr{\sbb', \ab'}}}}^2}.
\end{equation}

\subsubsection{Policy Evaluation for \algabbn}

Following the soft policy evaluation~\eqref{eq:soft_pev}, we learn the $Q$-function parameters with the Bellman residual
\begin{equation}\label{eq:q_loss_entropy_regularized}
    \Lcal^Q\rbr{\zeta;\pi} = \EE_{\sbb, \ab \sim \mathcal{D}} \sbr{\rbr{Q_\zeta(\sbb, \ab)-\rbr{r(\sbb, \ab)-\lambda \log \pi\rbr{\ab|\sbb}+\gamma\EE_{\sbb'}\EE_{\ab'\sim\pi\rbr{\ab'|\sbb'}}\sbr{Q_{\bar\zeta}\rbr{\sbb', \ab'}}}}^2}.
\end{equation}
In practice, as we sample a batch of actions and select the one with the highest $Q$-value like~\eqref{eq:behavior_policy} and add a Gaussian noise with tunable standard deviations, we directly select the log probability of additive noise as $\log \pi\rbr{\ab|\sbb}$.

\subsection{Algorithm Details}\label{sec:apdx_algorithm}
We show the detailed pseudocode of \algabbn here in \Cref{alg:dsp}.

\begin{algorithm}[h]
    \caption{\algnamen~(\algabbn)~\label{alg:dsp}}
    \begin{algorithmic}[1]
        \REQUIRE Diffusion noise schedule $\beta_t, \bar\alpha_t$ for $t\in \{1,2,\dots T\}$, MDP $\Mcal$, initial policy parameters $\theta_0$, initial Q-function parameters $\zeta_0$, replay buffer $\Dcal=\emptyset$, learning rate $\beta$, KL‐divergence coefficient $\lambda_0$ and target $\lambda_{\rm target}$
        \FOR{epoch $e=1,2,\dots$}
        \STATE \algcommentline{Sampling and experience replay.}
        \STATE Interact with $\Mcal$ using policy $s_{\theta_{e-1}}$ thorough algorithm update replay buffer $\Dcal$.
        \STATE Sample a minibatch of $(\sbb, \ab, r, \sbb')$ from $\Dcal$.
        \STATE \algcommentline{Policy evaluation.}
        \STATE  Sample $\ab'$ via reverse diffusion process~\eqref{eq:annealed_langevin_2} with $s_{\theta_{e-1}}$.\label{lst:line:sample}
        \STATE Update $Q_e$ with soft policy evaluation~\eqref{eq:soft_pev}.
        \STATE \algcommentline{Policy improvement for diffusion policies.}
        \STATE Sample $t$ uniformly from $\{1,2,\dots,T\}$. Sample $\ab_t$ from $h_t$.
        \STATE Sample $K$ $\tilde{\ab}_0^{(i)} \sim \phi_{0|t}$ for $i=1,2,\dots,K$.
        \STATE Compute $Q_e(\sbb,\tilde{\ab}_0)$ and update  $\theta_e$ with empirical loss of $$\EE_{\sbb, t}\sbr{\frac{1}{K}\exp\rbr{Q_e\rbr{s,\tilde\ab_0^{(i)} / \lambda_e }- \log\sum_i\exp\rbr{Q_e\rbr{s,\tilde\ab_0^{(i)} }/ \lambda_e }}\nbr{s_\theta\rbr{\ab_t; \sbb, t} - \nabla_{\ab_t}\log  \phi_{t}\rbr{\tilde\ab_0 \! \mid \! \ab_t}}^2}$$.
        \STATE Update KL‐divergence coefficient $\lambda_e\leftarrow \lambda_{e-1}+\beta(\lambda_{e-1} - \lambda_{\rm target})$.
        \ENDFOR
    \end{algorithmic}
\end{algorithm}

\subsection{Training Setups for the Toy Example}
\label{sec:apdx_toy_example}
Consider Gaussian mixture model with density function
\begin{equation}
    p_0(\xb) = 0.8 * \frac{1}{2\pi}\exp\rbr{-\frac{\nbr{\xb-[3;3]}^2}{2}} + 0.2 * \frac{1}{2\pi}\exp\rbr{-\frac{\nbr{\xb+[3;3]}^2}{2}}
\end{equation}
where the RSSM optimizes
\begin{equation}
        \underset{\substack{\xb_t\sim \ptil \\\xb_0\sim\phi_{0|t}}}{\EE}\sbr{p_0(\tilde\xb_0)\nbr{s_\theta\rbr{\xb_t;t} - \nabla_{\xb_t}\log \phi_{0|t}(\tilde\xb_0|\xb_t)}^2} 
\end{equation}
for the Gaussian sampling, $\ptil(\xb_t)=\Ncal(0, 4\Ib)$ for all $t$ and for uniform sampling, $\ptil_t$ is a uniform distribution from $[-6, 6]$ on both dimensions.
The score network is trained via the hyperparameters listed in \Cref{tab:hyperpara_toy}. The Langevin dynamics has direct access to the true score function $\nabla_\xb\log p_0(\xb)$.
\begin{table}[h]
    \centering
    \caption{Hyperparameters for the toy example.}
    \vspace{10pt}
    \begin{tabular}{l|c|l|c}
    \toprule
      \textbf{Name}   & \textbf{Value}&\textbf{Name}   & \textbf{Value} \\
      \midrule
      Learning rate & 3e-4 & Diffusion noise schedule & linear\\
      Diffusion steps & 20&Diffusion noise schedule start & 0.001\\
      Hidden layers & 2&Diffusion noise schedule end & 0.999\\
      Hidden layer neurons & 128&Training batch size & 1024\\
      Activation Function & LeakyReLU & Training epoches & 300 \\
         \bottomrule
    \end{tabular}
    \label{tab:hyperpara_toy}
\end{table}

\subsection{Additional Toy Examples}
\label{sec.two_moon}
Additionally, we show a more complex two-moon distribution with a known energy function as
\begin{equation}
    \log p(z) = -\frac{1}{2} \left( \frac{\lVert z \rVert - 2}{0.2} \right)^2 
+ \log\left( 
    \exp\left( -\frac{1}{2} \left( \frac{z_1 - 2}{0.3} \right)^2 \right) 
    + \exp\left( -\frac{1}{2} \left( \frac{z_1 + 2}{0.3} \right)^2 \right) 
\right)\label{eq:energy_funtion_two_moon}
\end{equation}
We compare the proposed \algabb with DDPM~\cite{ho2020denoising} that has access to the data samples, and two other Boltzmann samplers that have access to the energy functions~\eqref{eq:energy_funtion_two_moon}, iDEM~\cite{akhound2024iterated} based on diffusion with approximated noise-perturbed score functions, and FAB~\cite{midgley2022flow} based on normalizing flows. The results are shown in~\Cref{fig:two-moon}. We can see that three diffusion-based methods, RSM, DDPM, and iDEM, show similar performance. However, the two set of data samples are not separate enough for FAB, showing the advantage of the diffusion model and the limitations of normalizing flow from the restrictive invertible mappings.
\begin{figure}[h]
    \centering
    \subfigure[PDF of two moon distribution.]{\includegraphics[width=0.19\linewidth]{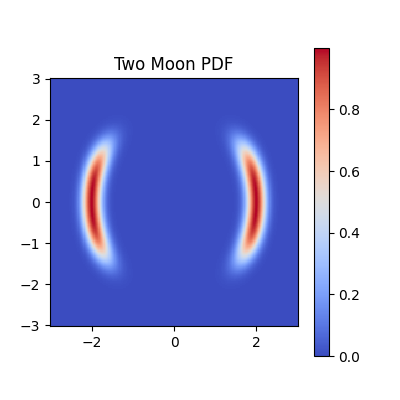}}
    \subfigure[RSM Generation.]
    {\includegraphics[width=0.19\linewidth]{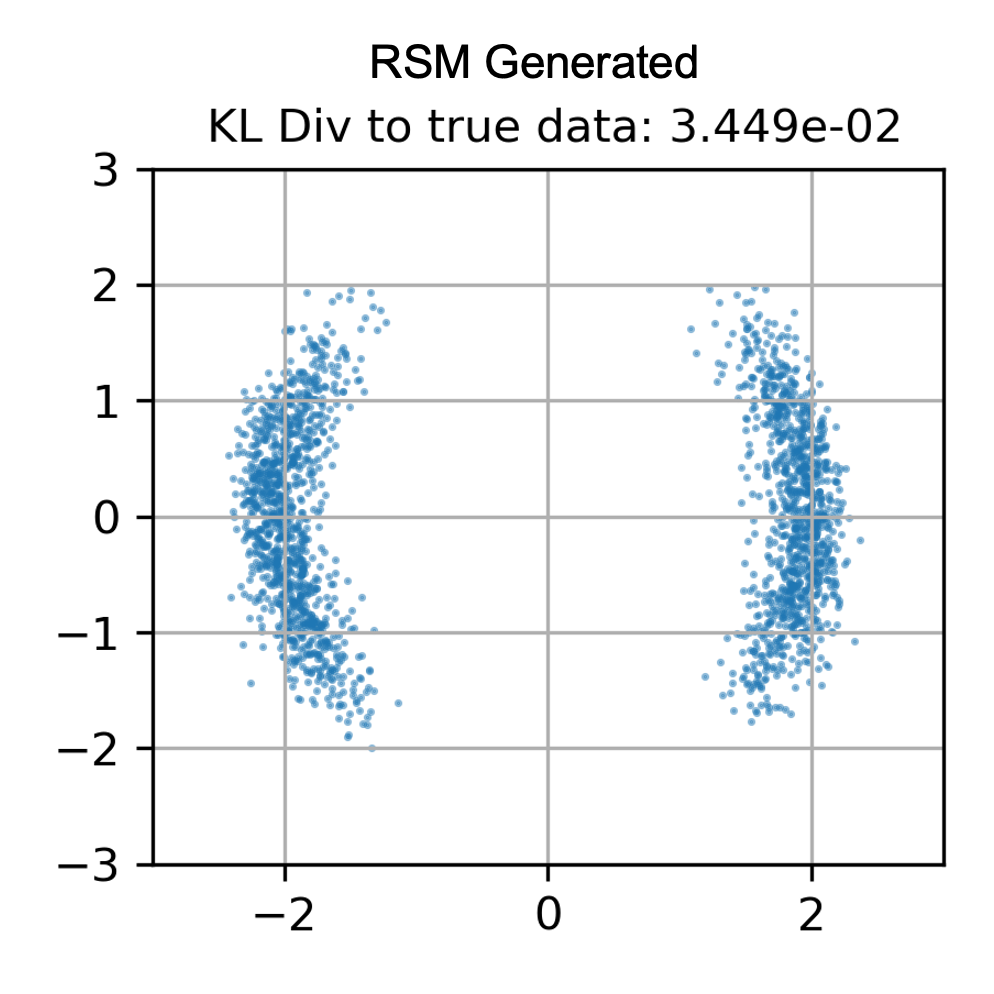}}
    \subfigure[DDPM Generation.]
    {\includegraphics[width=0.19\linewidth]{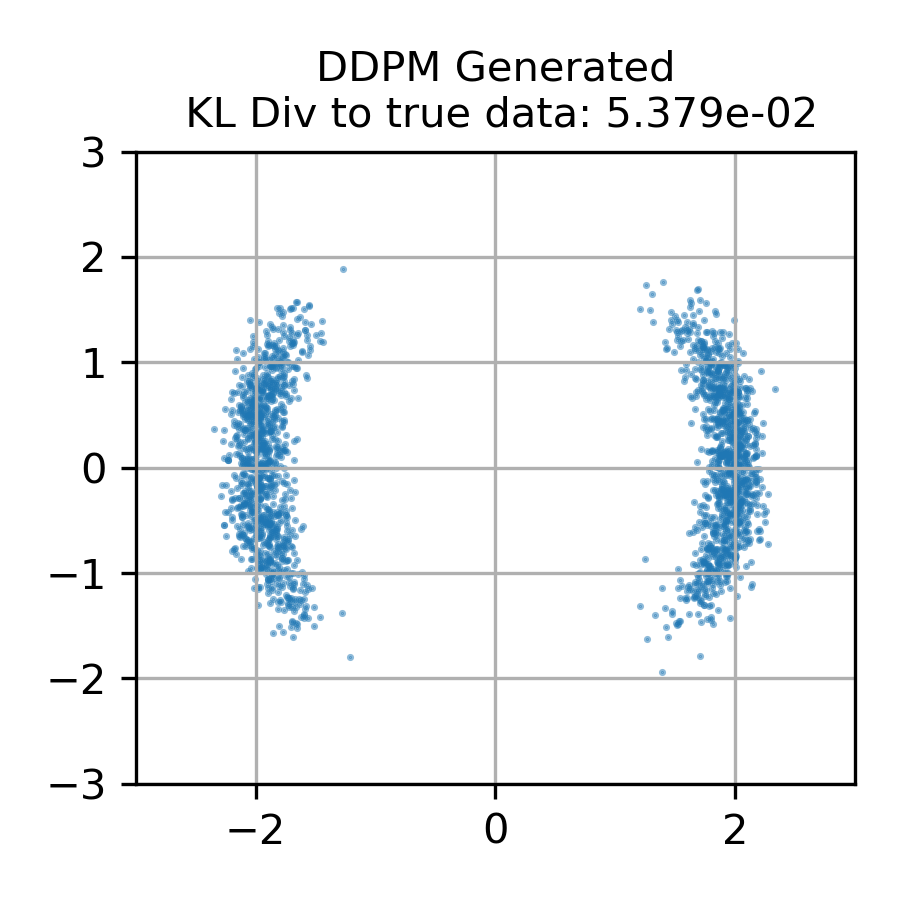}}
    \subfigure[iDEM Generation.]
    {\includegraphics[width=0.19\linewidth]{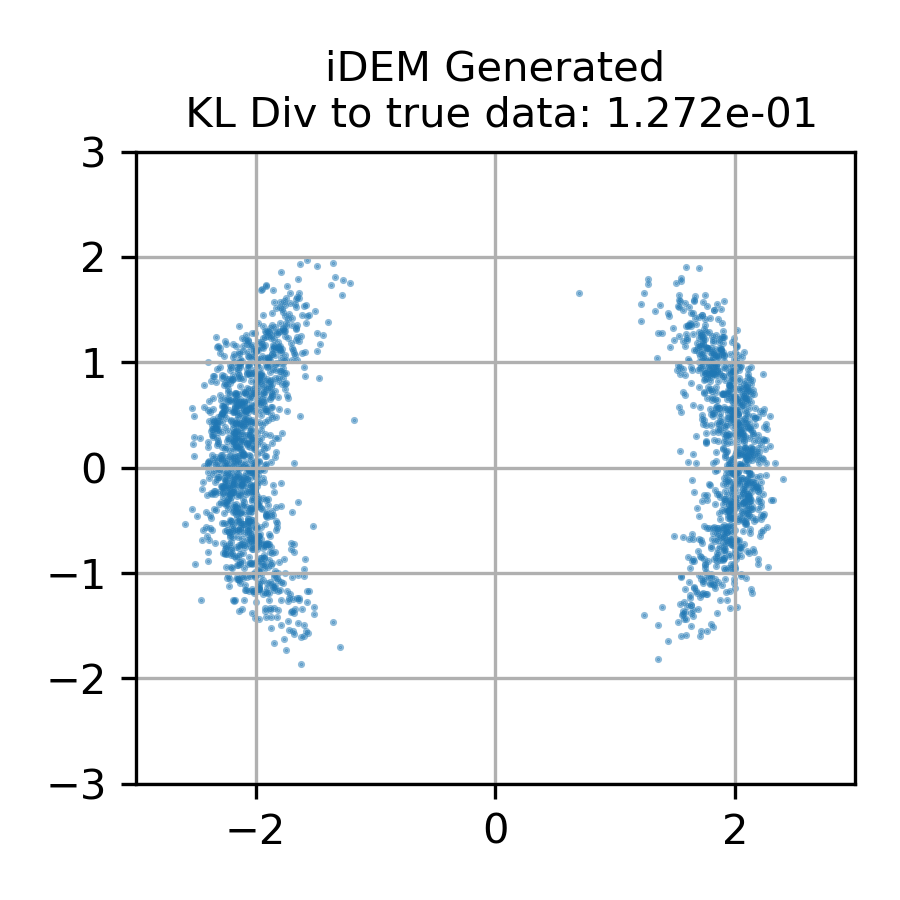}}
    \subfigure[FAB Generation.]{\includegraphics[width=0.19\linewidth]{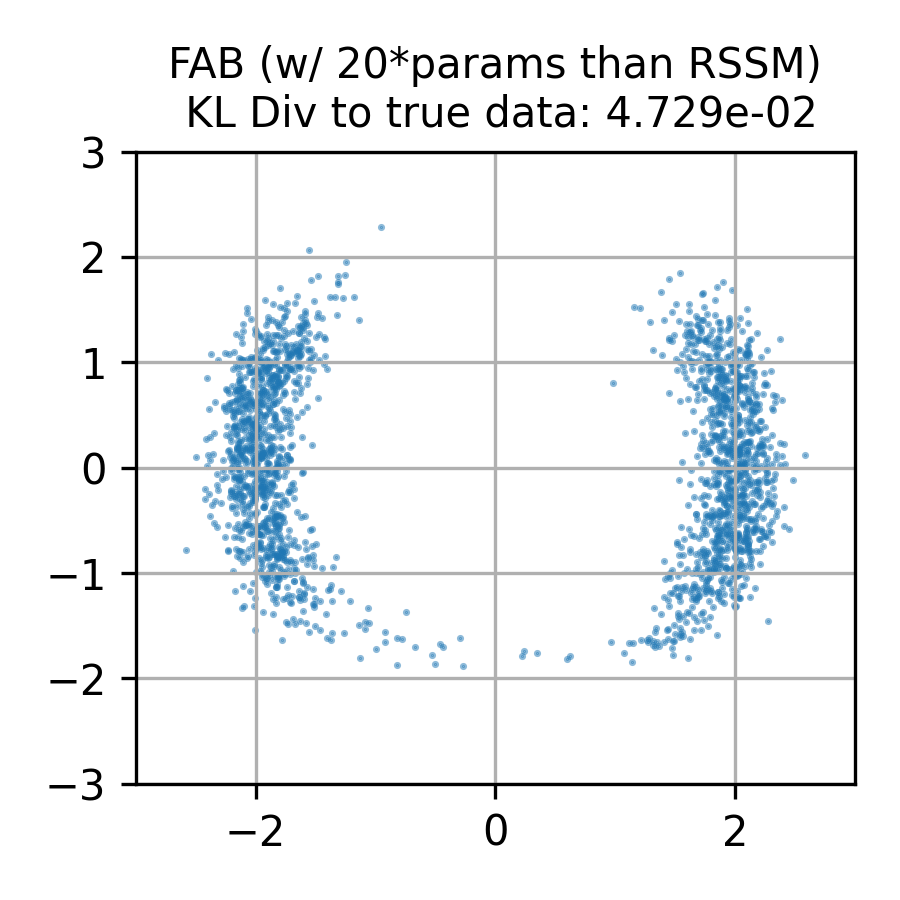}}
    \caption{Results to Fit Two Moon distribution, a commonly used Boltzmann sampler benchmark. We compare RSM, DDPM, iDEM, and FAB. RSM, DDPM, and iDEM all recover two separate modes, while FAB shows connections between the two modes.}
    \label{fig:two-moon}
\end{figure}

\subsection{Baselines}
\label{sec:apdx_baselines}
We include two families of methods as our baselines. For the first family of methods, we select 5 online diffusion-policy RL algorithms:  QSM~\cite{psenka2023learning}, QVPO~\cite{ding2024diffusion}, DACER~\cite{wang2024diffusion}, DIPO~\cite{yang2023policy} and DPPO~\cite{ren2024diffusion}. We include both off-policy (QSM, QVPO, DACER, DIPO) and on-policy (DPPO) diffusion RL methods among this group of algorithms. QSM uses the Langevin dynamics, one of the MCMC methods, with derivatives of learned $Q$-function as the score function. QVPO derives a Q-weighted variational objective for diffusion policy training, yet this objective cannot handle negative rewards properly. DACER directly backward the gradient through the reverse diffusion process and proposes a GMM entropy regulator to balance exploration and exploitation. DIPO utilizes a two-stage strategy, which maintains a large number of state-action particles updated by the gradient of the Q-function, and then fit the particles with a diffusion model.  DPPO constructs a two-layer MDP with diffusion steps and environment steps, respectively, and then performs Proximal Policy Optimization on the overall MDP. In our experiments, we use the training-from-scratch setting of DPPO to ensure consistency with other methods. 

The second family of baselines includes 3 classic model-free RL methods: PPO~\cite{schulman2017proximal}, TD3~\cite{fujimoto2018addressing} and SAC~\cite{haarnoja2018soft}. For PPO, we set the replay buffer size as 4096 and use every collected sample 10 times for gradient update. Across all baselines, we collect samples from 5 parallel environments in a total of 1 million environment interactions and 200k epoches/iterations. The results are evaluated with the average return of 20 episodes across 5 random seeds.
\subsection{Hyperparameters}
\label{sec:params}

\begin{table}[h]
    \centering
    \caption{Hyperparameters}
    \begin{tabular}{l|c}
    \toprule
      \textbf{Name}   & \textbf{Value} \\
      \midrule
      Critic learning rate & 3e-4 \\
      Policy learning rate & 3e-4, linear annealing to 3e-5\\
      Diffusion steps & 20\\
      Diffusion noise schedules & Cosine \\
      Policy network hidden layers & 3\\
      Policy network hidden neurons & 256\\
      Policy network activation & Mish\\
      Value network hidden layers & 3\\
      Value network hidden neurons & 256\\
      Value network activation & Mish\\
      Replay buffer size (off-policy only) & 1 million\\
         \bottomrule
    \end{tabular}
    \label{tab:main_hyper}
\end{table}
where the Cosine noise schedule means 
$\beta_t = 1 - \frac{\bar\alpha_t}{\alpha_{t-1}}$ with $\bar{\alpha}_t=\frac{f(t)}{f(0)}$, $f(t)=\cos \left(\frac{t / T+s}{1+s} * \frac{\pi}{2}\right)^2$.


\end{document}